\let\NAT@parse\undefined
\def\BibTeX{{\rm B\kern-.05em{\sc i\kern-.025em b}\kern-.08em
    T\kern-.1667em\lower.7ex\hbox{E}\kern-.125emX}}
\def\eqref#1{equation~\ref{#1}}
\def\1{\bm{1}}
\DeclareMathAlphabet{\mathsfit}{\encodingdefault}{\sfdefault}{m}{sl}
\SetMathAlphabet{\mathsfit}{bold}{\encodingdefault}{\sfdefault}{bx}{n}
\theoremstyle{plain}
\newtheorem{theorem}{Theorem}[section]
\newtheorem{proposition}[theorem]{Proposition}
\theoremstyle{definition}
\theoremstyle{remark}
\newcommand{\mcG}{\mathcal{G}} 
\newcommand{\mcV}{\mathcal{V}} 
\newcommand{\mcE}{\mathcal{E}} 
\newcommand{\tabincell}[2]{\begin{tabular}{@{}#1@{}}#2\end{tabular}}
\begin{document}

\title{Efficient Link Prediction via GNN Layers Induced by Negative Sampling}

\author{Yuxin Wang, Xiannian Hu, Quan Gan, Xuanjing Huang, Xipeng Qiu, David Wipf
\IEEEmembership{Fellow, IEEE}
\thanks{The authors are with the School of Computer Science, Fudan University (Yuxin Wang, Xiannian Hu, Xuanjing Huang, Xipeng Qiu) and Amazon (Quan Gan, David Wipf). \\
E-mail: \{wangyuxin21,21210240194\}@m.fudan.edu.cn, quagan@amazon.com, \{xjhuang.xpqiu\}@fudan.edu.cn, davidwipf@gmail.com}
}

\markboth{Journal of \LaTeX\ Class Files,~Vol.~18, No.~9, September~2020 }%
{Efficient Link Prediction via GNN Layers Induced by Negative Sampling}

\maketitle

\begin{abstract}
\footnote{This article has been accepted for publication in IEEE Transactions on Knowledge and Data Engineering. Citation information: DOI 10.1109/TKDE.2024.3481015}Graph neural networks (GNNs) for link prediction can loosely be divided into two broad categories.  First, \emph{node-wise} architectures pre-compute individual embeddings for each node that are later combined by a simple decoder to make predictions.  While extremely efficient at inference time, model expressiveness is limited such that isomorphic nodes contributing to candidate edges may not be distinguishable, compromising accuracy.  In contrast, \emph{edge-wise} methods rely on the formation of edge-specific subgraph embeddings to enrich the representation of pair-wise relationships, disambiguating isomorphic nodes to improve accuracy, but with increased model complexity.  To better navigate this trade-off, we propose a novel GNN architecture whereby the \emph{forward pass} explicitly depends on \emph{both} positive (as is typical) and negative (unique to our approach) edges to inform more flexible, yet still cheap node-wise embeddings.  This is achieved by recasting the embeddings themselves as minimizers of a forward-pass-specific energy function that favors separation of positive and negative samples.  Notably, this energy is distinct from the actual training loss shared by most existing link prediction models, where contrastive pairs only influence the \textit{backward pass}.  As demonstrated by extensive empirical evaluations, the resulting architecture retains the inference speed of node-wise models, while producing competitive accuracy with edge-wise alternatives. We released our code at \href{https://github.com/yxzwang/SubmissionverOfYinYanGNN}{https://github.com/yxzwang/SubmissionverOfYinYanGNN}.
\end{abstract}

\begin{IEEEkeywords}
Artificial Intelligence, Machine Learning, Graph Neural Networks, Link Prediction
\end{IEEEkeywords}

\section{Introduction} \label{sec:intro}
\IEEEPARstart{L}{ink} Prediction is a fundamental graph learning challenge that involves determining whether or not there should exist an edge connecting two nodes. Given the prevalence of graph-structured data, this task has widespread practical significance spanning domains such as social networks, knowledge graphs, and e-commerce, and recommendation systems~\cite{koren2009matrix, chamberlain2020tuning,schlichtkrull2017modeling}.  As one representative example of the latter, the goal could be to predict whether a user node should be linked with an item node in a product graph, where edges are indicative of some form of user/item engagement, e.g., clicks, purchases, etc.

Beyond heuristic techniques such as Common Neighbors (CN)~\cite{barabasi1999emergence}, Adamic-Adar (AA)~\cite{adamic2003friends}, and Resource Allocation (RA)~\cite{Zhou_2009}, graph neural networks (GNNs) have recently shown tremendous promise in addressing link prediction with trainable deep architectures~\cite{kipf2017, Hamilton2017,zhang2018link,zhang2022labeling,zhu2021neural,chamberlain2023graph}.  Broadly speaking these GNN models fall into two categories, based on whether they rely on \textit{node-wise} or \textit{edge-wise} embeddings. The former involves using a GNN to pre-compute individual embeddings for each node that are later combined by a simple decoder to predict the presence of edges.  This strategy is preferable when inference speed is paramount (as is often the case in real-world applications requiring low-latency predictions), since once node-wise embeddings are available, combining them to make predictions is cheap.  Moreover, accelerated decoding techniques such as Maximum Inner Product Search (MIPS)~\cite{shrivastavaAsymmetricLSHALSH2014,neyshaburSymmetricAsymmetricLSHs2015,yuGreedyApproachBudgeted2017} or Flashlight~\cite{wang2022flashlight} exist to further economize inference.  The downside though of node-wise embedding methods is that they may fail to disambiguate isomorphic nodes that combine to form a candidate edge~\cite{zhang2018link}.

To this end, edge-wise embeddings with greater expressive power have been proposed for more robust link prediction~\cite{zhang2022labeling,yun2021neo,chamberlain2023graph,yin2023surel}. These models base their predictions on edge-specific subgraphs capable of breaking isomorphic node relationships via structural information (e.g., overlapping neighbors, shortest paths, positional encodings, or subgraph sketches) that might otherwise undermine the performance of node-wise embeddings.  This flexibility comes with a substantial cost though, as inference complexity can be orders of magnitude larger given that a unique subgraph must be extracted and processed by the GNN for every test edge.  Although this expense
can be alleviated to some cases by pre-processing~\cite{chamberlain2023graph}, for inference over very large sets of candidate links, even the pre-processing time can be
overwhelming relative to that required by node-wise predictors.

In this work, we address the trade-off between expressiveness and inference efficiency via the following strategy.  To maintain minimal inference speeds, we restrict ourselves to a node-wise embedding approach and then try to increase the expressiveness on multiple fronts.  Most importantly, we allow each node-level embedding computed during the forward pass to depend on not only its ego network (i.e., subgraph containing a target node), but also on the embeddings of negatively sampled nodes, meaning nodes that were not originally sharing an edge with the target node.  This can be viewed as forming a complementary \textit{negative} ego network for each node.  Moreover, rather than heuristically incorporating the resulting positive \textit{and} negative ego networks within a traditional GNN-based embedding model, we instead combine them so as to infuse their integration with an inductive bias specifically tailored for link prediction.  Specifically, we introduce a parameterized graph-regularized energy, in the spirit of triplet ranking loss functions used for capturing both relative similarities and differences between pair-wise items.  By design, the parameter-dependent minimizer of this function can then serve the role of end-to-end trainable node-wise embeddings, \textit{explicitly dependent on the node features of both positive and negative samples even during the forward pass} (not just the backward training pass as is typical).  For these reasons, we refer to our model as a \textit{Yin} (negative) \textit{Yang} (positive) GNN, or YinYanGNN for short.

In this way, we increase the flexibility of node-wise embedding approaches, without significantly increasing the computational complexity, as no edge-wise embeddings or edge-specific subgraph extraction is necessary.  Additionally, by unifying the positive and negative samples within a single energy function minimization process, the implicit receptive field of the embeddings can be arbitrarily large without oversmoothing, a property we inherit from prior related work on optimization-based GNN models applied to much simpler node classification tasks~\cite{yang2021}.  These observations lead to a statement of our primary contributions:

\begin{enumerate}
    \item We design node-wise embeddings for link prediction that are explicitly imbued with an inductive bias informed by the node features of \textit{both} positive and negative samples during the model \textit{forward pass}, not merely the \textit{backward pass} as is customary with contrastive learning-based methods. This is accomplished by recasting the embeddings themselves as minimizers of an energy function that explicitly balances the impact of positive (Yang) and negative (Yin) samples, leading to a model we refer to as YinYanGNN.
    \item We analyze the convergence properties and computational complexity  of the optimization process which produces YinYanGNN embeddings, as well as their expressiveness relative to traditional node-wise models. These results suggest that our approach can potentially serve as a reliable compromise between node- and edge-wise alternatives.
    \item Experiments on real-world link prediction benchmarks reveal the YinYanGNN can outperform SOTA node-wise models in terms of accuracy while matching their efficiency.  And analogously, YinYanGNN can exceed the efficiency of edge-wise approaches while maintaining similar (and in some cases better) prediction accuracy.
\end{enumerate}
We summarize the differentiating factors of YinYanGNN compared to existing node-wise and edge-wise models in Table \ref{tab:tictable}, while the basic architecture is displayed in Figure \ref{fig:modelarchitec}, with more detailed descriptions to follow in subsequent sections.

\begin{table}[tbp]
    \centering
    \caption{Comparisons with existing node-wise and edge-wise models. Overall, YinYanGNN can achieve accuracy comparable to edge-wise models while maintaining the efficiency of node-wise models.  YinYanGNN also possesses multiple desirable properties linked to its unique architectural design.}
   \resizebox{0.45\textwidth}{!}{\begin{tabular}{c|c|c|c}
       & \textbf{Node-wise Models}   & \textbf{Edge-wise Models}  &  \textbf{YinYanGNN}      \\
       \hline
       \tabincell{c}{ Can distinquish some\\  isomorphic node pairs }
          &   {\Large \texttimes }  &    {\Large \checkmark }  & {\Large \checkmark }      \\
          \hline
            \tabincell{c}{ Competive\\ accuracy  }  &  {\Large \texttimes }    & {\Large \checkmark }    &    {\Large \checkmark }    \\
            \hline
  \tabincell{c}{ Fast inference \\ run-time  } & {\Large \checkmark }    &    {\Large \texttimes }  & {\Large \checkmark }      \\
   \hline
    \tabincell{c}{ Sublinear \\ acceleration  }  & {\Large \checkmark }   &    {\Large \texttimes }   & {\Large \checkmark }    \\
    \hline
 \tabincell{c}{ Energy descent\\ convergence guarantees  }  & {\Large \texttimes }     &   {\Large \texttimes }   & {\Large \checkmark } \\
 \hline
 \tabincell{c}{ Negative samples \\ in forward pass  }  & {\Large \texttimes }     &    {\Large \texttimes }   & {\Large \checkmark } \\
 \hline
    \end{tabular}}
    
    \label{tab:tictable}
\end{table}
\section{Related Work}
\textbf{GNNs for Link Prediction.} As mentioned in Section \ref{sec:intro}, GNN models for link prediction can be roughly divided into two categories, those based on node-wise embeddings~\cite{kipf2017,Hamilton2017,velivckovic2017graph} and those based on edge-wise embeddings~\cite{zhang2018link,chamberlain2023graph,yun2021neo,zhu2021neural,NEURIPS2022_2708a065,yin2022algorithm,yin2023surel}. The former is generally far more efficient at inference time given that the embeddings need only be computed once for each node and then repeatedly combined to make predictions for each candidate edge.  However, the latter is more expressive by facilitating edge-specific structural features at the cost of much slower inference.

\textbf{GNN Layers formed from unfolded optimization steps.} 
A plethora of recent research has showcased the potential of constructing resilient GNN architectures for node classification using graph propagation layers that emulate the iterative descent steps of a graph-regularized energy function \cite{ahn2022descent,chen2021graph,liu2021elastic,ma2020unified,pan2021a,yang2021,thatpaper,zhu2021interpreting}. These approaches allow the node embeddings at each layer to be regarded as progressively refined approximations of an interpretable energy minimizer. A key advantage is that embeddings obtained in this way can be purposefully designed to address challenges such as GNN oversmoothing or the introduction of robustness against spurious edges. Moreover, these adaptable embeddings can be seamlessly integrated into a bilevel optimization framework \cite{wang2016learning} for supervised training.  Even so, prior work in this domain thus far has been primarily limited to much simpler node classification tasks, where nuanced relationships between pairs of nodes need not be explicitly accounted for.  In contrast, we are particularly interested in the latter, and the potential to design new energy functions that introduce inductive biases suitable for link prediction.

\textbf{GNN Expressiveness.} Most work on GNN expressiveness focuses on the representational power of entire graphs\cite{feng2022powerful,frasca2022understanding,xu2018powerful,zhangrethinking,pmlr-v202-zhang23k,pmlr-v202-morris23a,zhou2024distance}.  More narrowly in terms of link prediction, structural representations of node sets have also been studied\cite{srinivasanequivalence}, as well as labeling tricks \cite{zhang2022labeling} first proposed by \cite{zhang2018link} to improve expressiveness. Overall, there now exists a variety of edge-wise methods\cite{chamberlain2023graph,yin2023surel,yin2022algorithm} specifically designed to focus on more expressive edge-specific structural features as we discuss elsewhere herein.

\textbf{Contrastive Learning.} Although negative pairs and contrastive learning are commonly used by link prediction frameworks and self-supervised learning more generally~\cite{NEURIPS2020_3fe23034,10.1145/3394486.3403168,shiao2022link}, their role in prior work is largely restricted to defining the training loss, and therefore only influencing the backward pass.  This is unlike YinYanGNN, whereby the negative samples also explicitly impact the forward pass and core model/encoder architecture itself.
\begin{figure*}[htbp]
    \centering
    \includegraphics[width=14cm]{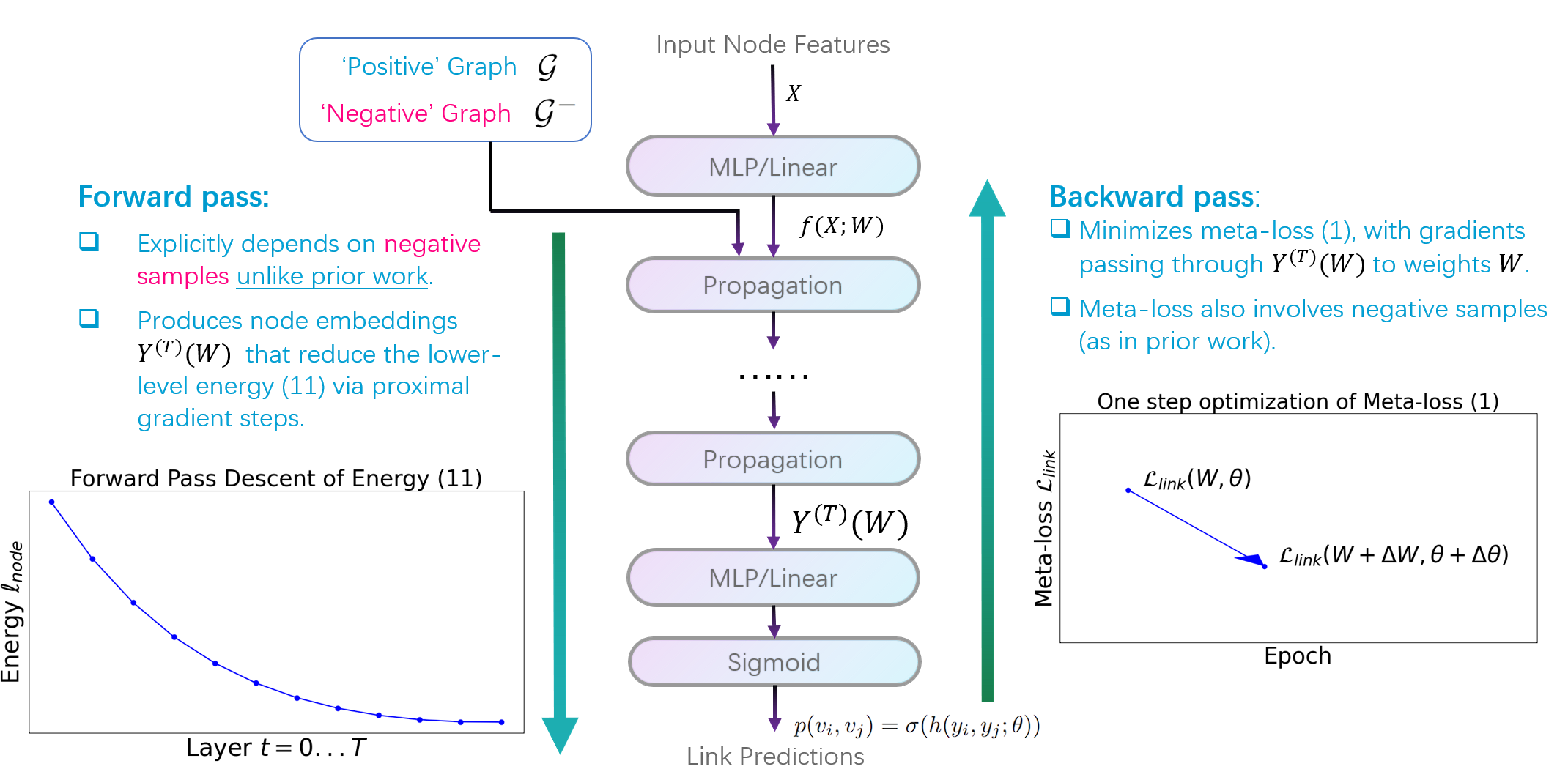}
    \caption{\textit{YinYanGNN model illustration}.  On the left side we show the YinYanGNN forward pass explicitly depending on negative samples, with layers computing embeddings that descend a lower-level energy $\ell_{node}$ defined by (\ref{eq:energywithnegative-final}). On the right side we show the more traditional backward pass for optimizing meta-loss (\ref{eq:link-pred-loss}) and one-step optimization of it over parameters $W$ (which define the lower-level energy) and $\theta$ (specific to the meta-loss).  Supporting details and derivations will be presented in Section \ref{sec:incorporatingnegative}.
     }
    \label{fig:modelarchitec}
\end{figure*}
\section{Preliminaries}

In this section we briefly introduce notation before providing concrete details of the link prediction problem that will be useful later.

\subsection{Notation}

Let $\mcG=(\mcV,\mcE, X)$ be a graph with node set $\mcV$, corresponding $d_x$-dimensional node features $X \in \mathbb{R}^{n \times d_x}$, and edge set $\mcE$, where $|\mcV|=n$. We use $A$ to denote the adjacency matrix and $D$ for the degree matrix. The associated Laplacian matrix is defined by $L\triangleq D-A$. Furthermore, $Y \in \mathbb{R}^{n \times d}$ refers to node embeddings of size $d$ we seek to learn via a node-wise link prediction procedure. Specifically the node embedding for node $i$ is $y_i$, which is equivalent to the $i$-th row of $Y$.

\subsection{Link Prediction}
\label{sec:link-pred}
We begin by introducing a commonly-used loss for link prediction, which is defined over the training set $\mcE_{train} \subset \mcE$. For both node-wise and edge-wise methods, the shared goal is to obtain an edge probability score $p(v_i,v_j)=\sigma(e_{ij})$ for all edges $(v_i,v_j) \in \mcE_{train}$ (as well as negatively-sampled counterparts to be determined shortly), where $\sigma$ is a sigmoid function and $e_{ij}$ is a discriminative representation for edge $(v_i,v_j)$.  Proceeding further, for every true positive edge $(v_i,v_j)$ in the training set, $N \geq 1$ negative edges $(v_{i^a},v_{j^a})_{a=1,...,N}$ are randomly sampled from the graph for supervision purposes.  We are then positioned to express the overall link prediction loss as $~~~\mathcal{L}_{link} \triangleq ~=$
\begin{align}
\label{eq:link-pred-loss}
    \sum_{(i,j) \in \mcE_{train}} \left[-\log(p(v_i,v_j))-\sum_{a=1}^{N}\frac{1}{N}\log(1-p(v_{i^a},v_{j^a}))\right],
\end{align}
where each edge probability is computed with the corresponding edge representation.  The lingering difference between node- and edge-wise methods then lies in how each edge representation $e_{ij}$ is actually computed. 

For node-wise methods, $e_{ij}=h(y_i,y_j)$, where $y_i$ and $y_j$ are node-wise embeddings and $h$ is a decoder function ranging in complexity from a parameter-free inner-product to a multi-layer MLP.  While decoder structure varies \cite{wang2021pairwise,rendle2020neural,sun2020adaptive,hu2020open,wang2022flashlight}, of particular note for its practical effectiveness is the HadamardMLP approach, which amounts to simply computing the hadamard product between $y_i$ and $y_i$ and then passing the result through an MLP with trainable parameters $\theta$.  Fast, sublinear inference times are possible with HadamardMLP using an algorithm from \cite{wang2022flashlight}.  In contrast, the constituent node embeddings themselves are typically computed with some form of trainable GNN encoder model $g$ of the form $y_i=g(x_i,\mcG_{i})$ and $y_j=g(x_j,\mcG_{j})$, where $\mcG_{i}$ and $\mcG_{j}$
are the subgraphs containing nodes $v_i$ and $v_j$, respectively.

Turning to edge-wise methods, the edge representation $e_{ij}$ relies on the subgraph $\mcG_{ij}$ defined by \textit{both} $v_i$ and $v_j$.  In this case $e_{ij}=h_{e}(v_i,v_j,\mcG_{ij})$, where $h_{e}$ is an edge encoder GNN whose predictions can generally \textit{not} be decomposed into a function of individual node embeddings as before.  Note also that while the embeddings from node-wise subgraphs for \textit{all} nodes in the graph can be produced by a \textit{single} GNN forward pass, a unique/separate edge-wise subgraph and corresponding forward pass are needed to make predictions for each candidate edge.  This explains why edge-wise models endure far slower inference speeds in practice.

\section{Incorporating Negative Sampling into Node-wise Model  Design}
\label{sec:incorporatingnegative}
Previously we described how computationally-efficient node-wise embedding methods for link prediction rely on edge representations that decompose as $e_{ij} = h[g(y_i,\mcG_i),g(y_j,\mcG_j)]$ for node-pair $(v_i,v_j)$, a decomposition that is decidedly less expressive than the more general form $e_{ij} = h_e(v_i,v_j, \mcG_{ij})$ adopted by edge-wise embedding methods.  Although we can never match the flexibility of the edge-wise models with a node-wise approach, we can nonetheless increase the expressiveness of node-wise models while still retaining their attractive computational footprint.  

At a high-level, our strategy for accomplishing this goal is to learn node-wise embeddings of the revised form $y_i = g(v_i,\mcG_i, \mcG_i^-)$, where $\mcG_i^-$ is a subgraph of $\mcG^-$ centered at node $v_i$, $\mcG^- = (\mcV,\mcE^-,X)$, and $\mcE^-$ is a set of negatively-sampled edges between nodes in the original graph $\mcG$.  In this way each node-wise embedding has access to node features from both positive and negative neighboring nodes.  

To operationalize this conceptual design, rather than heuristically embedding negative samples within an existing GNN architecture (see Section \ref{sec:neg_sampling_gnn} for experiments using this simple strategy), we instead chose node-wise embeddings that minimize an energy function regularized by both positive and negative edges, i.e., $\mcE$ and $\mcE^-$.  More formally, we seek a node embedding matrix $Y = \arg\min_Y \ell_{node}(\mcG,\mcG^-)$ in such a way that the optimal solution decomposes as $y_i = g(v_i,\mcG_i, \mcG_i^-)$ for some differentiable function $g$ across all nodes $v_i$.  This allows us to anchor the influence of positive and negative edges within a unified energy surface, with trainable minimizers that can be embedded within the link prediction loss from (\ref{eq:link-pred-loss}).  In the remainder of this section we motivate our selection for $\ell_{node}$, as well as the optimization steps which form the structure of the corresponding function $g$.

\subsection{An Initial Energy Function}

Prior work on optimization-based node embeddings \cite{chen2021does,ma2020unified,pan2021a,yang2021,thatpaper,zhu2021interpreting} largely draw on energy functions related to \cite{zhou2004learning}, which facilitates the balancing of local consistency relative to labels or a base predictor, with global constraints from graph structure.  However, these desiderata alone are inadequate for the link prediction task, where we would also like to drive individual nodes towards regions of the embedding space where they are maximally discriminative with respect to their contributions to positive and negative edges.  To this end we take additional inspiration from triplet ranking loss functions \cite{rendle2012bpr} that are explicitly designed for learning representations that can capture relative similarities or differences between items.

With these considerations in mind, we initially posit the energy function \hspace*{0.2cm} $\ell_{node}~~\triangleq  ||Y-f(X;W)||^2_{F} +$

\begin{align}
\label{eq:energywithnegative}
   \lambda\sum_{(i,j) \in \mcE}\Big[dist(y_i,y_j)-\frac{\lambda_{K}}{\lambda K}\sum_{j' \in 
 \mcV_{(i,j)}^K}dist(y_i,y_j')\Big],
\end{align}
where $f(X;W)$ (assumed to apply row-wise to each individual node feature $x_i$) represents a base model that processes the input features using trainable weights $W$, $dist(y_i,y_j)$ is a distance metric, while $\lambda$ and $\lambda_K$ are hyperparameters that control the impact of positive and negative edges. Moreover, $\mcV_{(i,j)}^{K}$ is the set of negative destination nodes sampled for edge $(v_i,v_j)$ and $|\mcV_{(i,j)}^K|=K$.  Overall, the first term pushes the embeddings towards the processed input features, while the second and third terms apply penalties to positive and negative edges in a way that is loosely related to the aforementioned triplet ranking loss (more on this below). 

If we choose $dist(i,j) = ||y_i-y_j||^2$ and define edges of the negative graph $\mcG^-$ as  $\mcE^{-} \triangleq \{(i,j')| ~ j' \in \mcV_{(i,j)}^{K} \text{ for } (i,j) \in \mcE \}$, we can rewrite (\ref{eq:energywithnegative}) as $~~~\ell_{node}=$
 \begin{align}
   \label{eq:energywithnegative-matrix}
    ||Y-f(X;W)||^2_{F} +\lambda \text{tr}[Y^{\top}LY]-\frac{\lambda_{K}}{K} \text{tr}[Y^{\top}L^{-}Y],
 \end{align}
where $L^{-}$ is the Laplacian matrix of $\mcG^{-}$.
To find the minimum, we compute the gradients
\begin{align}
    \frac{\partial \ell_{node}}{\partial Y} =2(Y-f(X;W))+ 2\lambda LY -\frac{\lambda_K}{K}2L^{-}Y,
\end{align}
where $Y^{(0)}=f(X;W)$. The corresponding gradient descent updates then become $Y^{(t+1)}=$
\begin{align}
\label{eq:update_original}
    Y^{(t)}-\alpha((Y^{(t)}-f(X;W))+ \lambda LY^{(t)} -\frac{\lambda_K}{K}L^{-}Y^{(t)}),
\end{align}
where the step size is $\frac{\alpha}{2}$.  We note however that (\ref{eq:energywithnegative-matrix}) need not generally be convex or even lower bounded.  Moreover, the gradients may be poorly conditioned for fast convergence depending on the Laplacian matrices involved.  Hence we consider several refinements next to stabilize the learning process.

\subsection{Energy Function Refinements} \label{sec:refinements}

\textbf{Lower-Bounding the Negative Graph.} Since the regularization of negative edges brings the possibility of an ill-posed loss surface (a non-convex loss surface that may be unbounded from below) , we introduce a convenient graph-aware lower-bound analogous to the max operator used by the triplet loss. Specifically, we update (\ref{eq:energywithnegative-matrix}) to the form $~~~\ell_{node}~= ||Y-f(X;W)||^2_{F} +$
\begin{align}
   \label{eq:energywithnegative-matrix-softplus}
    \lambda \text{tr}[Y^{\top}LY]+ \frac{\lambda_{K}}{K} \text{Softplus}(|\mcE|\gamma- \text{tr}[Y^{\top}L^{-}Y]),
 \end{align}
noting that we use $\text{Softplus}(x)=\log(1+e^{x})$ instead of $max(\cdot,0)$ to make the energy differentiable. Unlike triplet loss that includes positive term in the $max(\cdot,0)$ function, we only restrict the lower bound for the negative term, because we still want the positive part to impact our model when the negative part hits the bound.

\textbf{Normalization.} We use the normalized Laplacian matrices of original graph $\mcG$ and negative graph $\mcG^{-}$ instead to make the gradients smaller. Also, for the gradients of the first term in (\ref{eq:energywithnegative-matrix}), we apply a re-scaling by summation of degrees of both graphs. The modified gradients are $\frac{\partial \ell_{node}}{\partial Y} =$
\begin{align}
    2(D+D^{-})^{-1}(Y-f(X;W))+ 2\lambda \tilde{L}Y -\frac{\lambda_K}{K}2\tilde{L}^{-}Y,
\end{align}
where $D$ and $D^{-}$ are diagonal degree matrices of $\mcG$ and $\mcG^{-}$. The normalized Laplacians are $\tilde{L}=D^{-\frac{1}{2}}LD^{-\frac{1}{2}}, \tilde{L}^{-}={D^{-}}^{-\frac{1}{2}}L^{-}{D^{-}}^{-\frac{1}{2}} $ leading to the corresponding energy
\begin{align}
    \label{eq:energywithnegative-normalized}
        \ell_{node}~=\left\|(D+D^{-})^{-1}(Y-f(X;W))\right\|^2_{F} + \\ \nonumber
        \lambda \text{tr}[Y^{\top}\tilde{L}Y]-\frac{\lambda_{K}}{K} \text{tr}[Y^{\top}\tilde{L}^{-}Y].
\end{align}

\textbf{Learning to Combine Negative Graphs.} We now consider a more flexible implementation of negative graphs. More concretely, we sample $K$ negative graphs ${\{\mcG^{-}_{(k)}\}}_{k = 1, ..., K}$, in which every negative graph consists of one negative edge per positive edge ($\mcG^{-}_{(k)} \triangleq \{(i,j')|~j'\in \mcV_{(i,j)}^1 \text{ for } (i,j) \in \mcE \}$) (note that the superscript on $\mcV$ implies a single negative sample per positive edge).  And we set learnable weights $\lambda_K^{k}$ for the structure term of each negative graph, which converts the energy function to $~\ell_{node}~=$
\begin{align}
    \label{eq:energywithnegative-matrix-learnnegs}
    ||Y-f(X;W)||^2_{F} +\lambda \text{tr}[Y^{\top}LY]-\frac{1}{K}\sum_{k=1}^K \lambda_K^{k}\text{tr}[Y^{\top}L^{-}_{k}Y],
\end{align}
Also in practice we normalize this energy function to
\begin{eqnarray} 
    \label{eq:energywithnegative-matrix-learnnegs-norm}
    \ell_{node}&=&\left\|(D+D^{-}_K)^{-1}(Y-f(X;W))\right\|^2_{F} +\lambda \text{tr}[Y^{\top}\tilde{L}Y] \nonumber  \\
    && -\frac{1}{K}\sum_{k=1}^K \lambda_K^{k}\text{tr}[Y^{\top}\tilde{L^{-}_{k}}Y],
\end{eqnarray}
where $D^{-}_K=\sum_{k=1}^{K} D^{-}_k$, $D^{-}_k$ is the degree matrix of $L^{-}_{k}$ and $\tilde{L^{-}_{k}}={D^{-}_K}^{-\frac{1}{2}}L^{-}_k{D^{-}_K}^{-\frac{1}{2}}.$ The lower bound is also added as before.  Overall, the motivation here is to inject trainable flexibility into the negative sample graph, which is useful for increasing model expressiveness.

\subsection{The Overall Algorithm}

Combining the modifications we discussed in last section (and assuming a single, fixed $\lambda_K$ here for simplicity; the more general, learnable case with multiple $\lambda_K^k$ from Section \ref{sec:refinements} naturally follows), we obtain the final energy function 
\begin{align}
    \label{eq:energywithnegative-final}
        \ell_{node}=&\left\|(D+D^{-})^{-1}(Y-f(X;W))\right\|^2_{F} +\lambda \text{tr}[Y^{\top}\tilde{L}Y] \nonumber \\ 
        &+ \frac{\lambda_{K}}{K} \text{Softplus}(\gamma|\mcE|-\text{tr}[Y^{\top}\tilde{L}^{-}Y]),
\end{align}
with the associated gradients$~\frac{\partial \ell_{node}}{\partial Y} ~=$
\begin{equation}
\label{eq:grad_energywithnegative-final}
     2(D+D^{-})^{-1}(Y-f(X;W))+ 2\lambda \tilde{L}Y -\frac{\lambda_K}{K}2\tilde{L}^{-}Y\sigma(Q),
\end{equation}
where $Q=\gamma|\mcE|-\text{tr}[Y^{\top}\tilde{L}^{-}Y]$ and $\sigma (x)$ is the sigmoid function. The final updates for our model then become 
\begin{align}
\label{eq:updates_energywithnegative-final}
     Y^{(t+1)}& = Y^{(t)}-\alpha\Big((D+D^{-})^{-1}(Y^{(t)}-f(X;W))+ \lambda \tilde{L}Y^{(t)} \nonumber \\
     &-\frac{\lambda_K}{K}\tilde{L}^{-}Y^{(t)}\sigma(Q^{(t)})\Big) \nonumber \\
    &=C_1Y^{(t)} + C_2f(X;W) + c_3 \tilde{A} Y^{(t)} - c_4 \tilde{A}^{-} Y^{(t)}, 
\end{align}
where the diagonal scaling matrices $(C_1, C_2)$ and scalar coefficients $(c_3, c_4)$ are given by 
\begin{align}
    C_1&=\left(1-\alpha\lambda+\tfrac{1}{K}\alpha\lambda_K \sigma(Q^{(t)})\right)I-\alpha(D+D^{-})^{-1}\text{, }  ~~  \nonumber \\C_2&=\alpha(D+D^{-})^{-1}, \nonumber \\
    c_3&= \alpha\lambda \text{, } ~~c_4=\left( \tfrac{1}{K}\alpha\lambda_K \sigma(Q^{(t)})\right),
\end{align}
with $Q^{(t)}=\gamma|\mcE|-\text{tr}[({Y^{(t)}})^{\top}\tilde{L}^{-}Y^{(t)}]$, $I$ as an $n \times n$ identity matrix, and $\frac{\alpha}{2}$ as the step size.


From the above expressions, we observe that the first  and second terms of (\ref{eq:updates_energywithnegative-final}) can be viewed as rescaled skip connections from the previous layer and input layer/base model, respectively.  As we will later show, these scale factors are designed to facilitate guaranteed descent of the objective from (\ref{eq:energywithnegative-final}).  Meanwhile, the third term of (\ref{eq:updates_energywithnegative-final}) represents a typical GNN graph propagation layer while the fourth term is the analogous negative sampling propagation unique to our model.  In the context of Chinese philosophy, the latter can be viewed as the Yin to the Yang which is the third term, and with a trade-off parameter that can be learned when training the higher-level objective from (\ref{eq:link-pred-loss}), the Yin/Yang balance can in a loose sense be estimated from the data; hence the name \textit{YinYanGNN} for our proposed approach. 

We illustrate key aspects of YinYanGNN in Figure \ref{fig:modelarchitec} and the explicit computational steps are shown in  Algorithm \ref{algo:YinYanGNN}. And we emphasize that although the derivations of YinYanGNN may be somewhat complex, \textit{the algorithm that results is quite simple and efficient.}

\textbf{YinYanGNN Training.} Upon inspection of Algorithm \ref{algo:YinYanGNN}, we observe that lines 3 to 9 depict the training process for each epoch.  Here we first sample the negative graph on line 3 which is used (unique to our model) on lines 4 to 6 to produce the YinYanGNN encoder node embeddings $Y^{(T)}$; this completes one model \textit{forward pass}. Next, on line 7 we again sample negative edges as needed by the supervised training loss.  Given these negative samples, along with predictive probabilities obtained via node embeddings from $Y^{(T)}$ passed through the HadamardMLP decoder, we compute the link prediction training loss defined by (\ref{eq:link-pred-loss}) on line 8.  Finally, we update model parameters $\{W,\theta \}$ (and optionally $\lambda_K$) by standard backpropagation on line 9; this executes one model \textit{backward pass} (in our experiments we use the Adam optimizer).

\textbf{YinYanGNN Inference.} For inference, we basically follow one forward pass of the training process from above to obtain encoder node embeddings $Y^{(T)}$. These are then applied to the HadamardMLP decoder to produce link prediction scores or edge probability estimates analogous to line 8, but without the loss computation.

\begin{algorithm}[htbp] 
\caption{YinYanGNN Bilevel Optimization Algorithm for Link Prediction}
\LinesNumbered 
\KwIn{Graph $\mcG$, node features $X$, number of layers $T$, number of epochs $M$, trainable base model $f(\cdot;W)$ , HadamardMLP predictor $h(\cdot;\theta)$}
\KwOut{Trained model weights $W$ and Hadamard MLP predictor weights $\theta$. }
\label{algo:YinYanGNN}
Set initial prediction $Y^{(0)}=f(X;W)$, where $f$ is the trainable base model.\\
\For{Epoch = 1 to $M$}{
Sample negative graph $\mcG^{-}$.\\
\For{$t =0$ to $T-1$}{
$Y^{(t+1)}=\text{Update}(Y^{(t)})$, computed via (\ref{eq:updates_energywithnegative-final}).
}
Sample negative edges for supervision based on $\mcE_{tr}$. \\
Compute the link prediction loss (\ref{eq:link-pred-loss}) with node embeddings $Y^{(T)}$ and HadamardMLP $h(\cdot;\theta)$. \\
Backpropagate over parameters $\{W,\theta \}$ (and optionally $\lambda_K$) using optimizer (Adam, etc.)

}

\end{algorithm}

\section{Analysis}

In this section we first address the computational complexity and convergence issues of our model before turning to further insights into the role of negative sampling in our proposed energy function.
\subsection{Time Complexity}
YinYanGNN has a time complexity given by $O(T|\mcE|Kd+nP d^2)$ for one forward pass, where as before  $n$ is the number of nodes, $T$ is the number of propagation layers/iterations, $P$ is the number of MLP layers in the base model $f(X;W)$, and $d$ is the hidden embedding size.  Notably, this complexity is of the same order as a vanilla GCN model, one of the most common GNN architectures~\cite{kipf2017}, which has a forward-pass complexity of $O(T(|\mcE| d+nd^2))$. 
\begin{table*}[htbp]
    \centering
    \small
    \setlength{\tabcolsep}{.8mm}
    \caption{Time complexity comparisons. For BUDDY, $b$ is the hop number for propagation and $h$ is the complexity of hash operations. $^*$ indicates that the complexity can be sublinear to $n$ via~\cite{wang2022flashlight}, an option that is only available to node-wise embedding models such as YinYanGNN.}\label{tab:complexity} 
\vskip 0.15in
    \begin{tabular}{l|cc|cc}
    \toprule
         &
         \textbf{SEAL} &  
         \textbf{BUDDY} & 
         \textbf{YinYanGNN} & 
         \textbf{GCN}
         \\
\midrule
 Preprocess    &
         $O(1)$ &  
         $O(b|\mcE|(d+h))$ & 
         $O(1)$& 
         $O(1)$
         \\\midrule
             Train    &
         $O(|\mcE|d^2|\mcE_{tr}|)$ &  
         $O((b^2h+bd^2)|\mcE_{tr}|)$ & 
         $O(T|\mcE|Kd+nP d^2 + |\mcE_{tr}|d^2)$& 
         $O(T(|\mcE|d+nd^2) + |\mcE_{tr}|d^2)$
         \\\midrule
         Encode    &
         \multirow{2}*{$O(|\mcE|d^2|\mcV_{src}|n)$} &  
         \multirow{2}*{$O((b^2h+bd^2)|\mcV_{src}|n)$} & 
         $O(T|\mcE|Kd+nP d^2)$ & 
       $O(T(|\mcE|d+nd^2) )$
         \\
          Decode    &
         ~ &  
         ~ & 
         $O(d^2|\mcV_{src}|n)^*$& 
         $O(d^2|\mcV_{src}|n)^*$
         \\\hline
\end{tabular}

\end{table*}

We now drill down further into the details of overall inference speed.  We denote the set of source nodes for test-time link prediction as $\mcV_{src}$, and for each node we examine all the other nodes in the graph, which means we have to calculate roughly $|\mcV_{src}|n$ edges. We compare YinYanGNN's time with SEAL~\cite{zhang2018link} and a recently proposed fast baseline BUDDY~\cite{chamberlain2023graph} in Table \ref{tab:complexity}. Like other node-wise methods, we split the inference time of our model into two parts: computing embeddings and decoding  (for SEAL and BUDDY they are implicitly combined). The node embedding computation can be done only once so it does not depend on $\mcV_{src}$. Our decoding process uses HadamardMLP to compute scores for each destination node (which can also be viewed as being for each edge) and get the top nodes (edges). From this it is straightforward to see that the decoding time dominates the overall computation of embeddings. So for the combined inference time, SEAL is the slowest because of the factor $|\mcE|n$ while BUDDY and our model are both linear in the graph node number $n$ independently of $|\mcE|$. However, BUDDY has a larger factor including the compution of subgraph structure $b^2h$, which will be much slower as our experiments will show. Moreover, unlike SEAL or BUDDY, we can apply Flashlight~\cite{wang2022flashlight} to node-wise methods like YinYanGNN, an accelerated decoding method based on maximum inner product search (MIPS) that allows HadamardMLP to have sublinear decoding complexity in $n$. See Section \ref{sec:sec-inference} for related experiments and discussion.

\subsection{Convergence of YinYanGNN Layers/Iterations}

 Convergence criteria for the energies from (\ref{eq:energywithnegative-matrix}) and (\ref{eq:energywithnegative-final}) are as follows (see Appendix \ref{sec:appendix-proofs} for proofs):
 \begin{proposition}
 If $\lambda_K< K \cdot \delta_{max}$, where $\delta_{max}$ is the largest eigenvalue of $L^{-}$, then (\ref{eq:energywithnegative-matrix}) has a unique global minimum.  Moreover, if the step-size parameter satisfies $\alpha< \left\|I+\lambda \tilde{L} -\frac{\lambda_K}{K}\tilde{L}^{-} \right\|_{F}^{-1}$, then the gradient descent iterations of (\ref{eq:update_original}) are guaranteed to converge to this minimum.
    
 \end{proposition}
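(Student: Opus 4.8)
The plan is to treat the energy in (\ref{eq:energywithnegative-matrix}) as a single quadratic form in $Y$ and reduce both claims to spectral properties of the matrix $M \triangleq I + \lambda L - \frac{\lambda_K}{K} L^{-}$. First I would note that the Frobenius term and both trace terms act on the columns of $Y$ independently, so $\ell_{node}$ decouples into $d$ identical scalar-valued quadratics, one per column, each of the form $y^{\top} M y - 2 y^{\top} f_{\cdot} + (\text{const})$ where $f_{\cdot}$ is the corresponding column of $f(X;W)$. Consequently $\ell_{node}$ is strictly convex and possesses a unique global minimizer precisely when $M \succ 0$, in which case setting the gradient to zero gives the closed form $Y^{*} = M^{-1} f(X;W)$.

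To establish $M \succ 0$ I would bound $\lambda_{\min}(M)$ from below. Writing $M = (I + \lambda L) - \frac{\lambda_K}{K} L^{-}$ and using that $L \succeq 0$ gives $\lambda_{\min}(I + \lambda L) = 1$, while $\lambda_{\min}\big(-\tfrac{\lambda_K}{K} L^{-}\big) = -\tfrac{\lambda_K}{K}\delta_{max}$, the subadditivity of the smallest eigenvalue (Weyl's inequality, equivalently the variational characterization $y^{\top} M y = \min$ over unit vectors) yields $\lambda_{\min}(M) \geq 1 - \tfrac{\lambda_K}{K}\delta_{max}$. The hypothesis relating $\lambda_K$, $K$, and $\delta_{max}$ is exactly what forces $\tfrac{\lambda_K}{K}\delta_{max} < 1$, making this lower bound strictly positive, so $M$ is positive definite and the uniqueness claim follows. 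The step I would be most careful with is this eigenvalue bound: $L$ and $L^{-}$ are in general not simultaneously diagonalizable (they are Laplacians of different graphs), so one cannot argue eigenvalue-by-eigenvalue, and Weyl's inequality is what licenses controlling the worst-case direction, in particular the top eigenvector of $L^{-}$, which may lie in the null space of $L$.

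For convergence I would rewrite the update (\ref{eq:update_original}) as the affine fixed-point iteration $Y^{(t+1)} = (I - \alpha M) Y^{(t)} + \alpha f(X;W)$, whose unique fixed point coincides with the minimizer $Y^{*} = M^{-1} f(X;W)$ identified above (the invertibility of $M$ being guaranteed by the first part). Standard linear-iteration theory then gives convergence from any initialization whenever the spectral radius $\rho(I - \alpha M) < 1$; since $M$ has eigenvalues $\mu_i > 0$, this is equivalent to $0 < \alpha \mu_i < 2$ for all $i$, i.e.\ $\alpha < 2/\lambda_{\max}(M)$. Finally I would verify that the stated step-size bound is a sufficient (if conservative) condition by chaining $\lambda_{\max}(M) = \|M\|_{2} \leq \|M\|_{F}$, so that $\alpha < \|M\|_{F}^{-1}$ forces $\alpha \lambda_{\max}(M) < 1 < 2$; this places every $1 - \alpha\mu_i$ strictly inside $(0,1)$ and drives $\rho(I-\alpha M)$ below one. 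The error $Y^{(t)} - Y^{*}$ then contracts geometrically at rate $\rho(I - \alpha M)$, which completes the argument.
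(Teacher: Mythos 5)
Your proof is correct. On the first claim you follow essentially the paper's route—both arguments reduce uniqueness to positive definiteness of $M = I + \lambda L - \frac{\lambda_K}{K}L^{-}$ and lower-bound its smallest eigenvalue by $1 - \frac{\lambda_K}{K}\delta_{max}$—but you are actually more careful than the paper: its proof asserts the smallest Hessian eigenvalue \emph{equals} $1 + \lambda\delta^{+}_{min} - \frac{\lambda_K}{K}\delta_{max}$, which in general is only a lower bound because $L$ and $L^{-}$ are Laplacians of different graphs and need not be simultaneously diagonalizable; this is exactly the subtlety you flag and repair with Weyl's inequality. (Note also that both you and the paper in fact use the condition $\frac{\lambda_K}{K}\delta_{max} < 1$, i.e.\ $\lambda_K < K/\delta_{max}$; the proposition's literal hypothesis $\lambda_K < K\cdot\delta_{max}$ appears to be a typo that the paper's own proof reproduces, so your reading matches the intended statement.) On the convergence claim your route genuinely differs. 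The paper invokes the generic descent theorem—gradient descent on a strongly convex function with $C$-Lipschitz gradient converges for step-size below $C^{-1}$—and extracts $C = 2\bigl\|I+\lambda \tilde{L} -\frac{\lambda_K}{K}\tilde{L}^{-}\bigr\|_F$ from submultiplicativity of the Frobenius norm. You instead exploit the affine structure of the iteration, rewriting it as $Y^{(t+1)} = (I-\alpha M)Y^{(t)} + \alpha f(X;W)$ and bounding the spectral radius $\rho(I-\alpha M)$. Your analysis buys strictly more: a closed-form limit $Y^{*} = M^{-1}f(X;W)$, an explicit geometric contraction rate, and the observation that the stated Frobenius-norm bound is conservative relative to the sharp linear-iteration condition $\alpha < 2/\lambda_{\max}(M)$. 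What the paper's descent-lemma argument buys in exchange is portability: it is the template that extends to the non-quadratic Softplus energy of Proposition 5.2, where no affine fixed-point structure is available. (One last cosmetic point: you consistently use the unnormalized $L, L^{-}$ appearing in the energy and update equations, whereas the proposition's bound is written with $\tilde{L}, \tilde{L}^{-}$—a notational inconsistency on the paper's side, not a gap in your argument.)
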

 
 \begin{proposition}
    There exists an $\alpha' > 0$ such that for any $\alpha \in (0,\alpha']$, the iterations (\ref{eq:updates_energywithnegative-final}) will converge to a stationary point of (\ref{eq:energywithnegative-final}).
 \end{proposition}

\subsection{Role of Negative Sampling in Proposed Energy}
\label{sec:roleofnegative}


Previous work~\cite{NEURIPS2021_b8b2926b, NEURIPS2022_cb2a4cc7} has demonstrated how randomly dropping edges of two isomorphic graphs can produce non-isomorphic, distinguishable subgraphs, and in so doing, enhance  GNN expressiveness.  Our incorporation of negative samples within the YinYanGNN encoder/forward pass can loosely be viewed in similar terms, but with a novel twist:  Instead of randomly \textit{dropping} edges in the original graph, we randomly \textit{add} typed negative edges that are likewise capable of breaking otherwise indistinguishable symmetries.

\begin{figure}[htbp]
\centering
\includegraphics[width=0.25\textwidth]{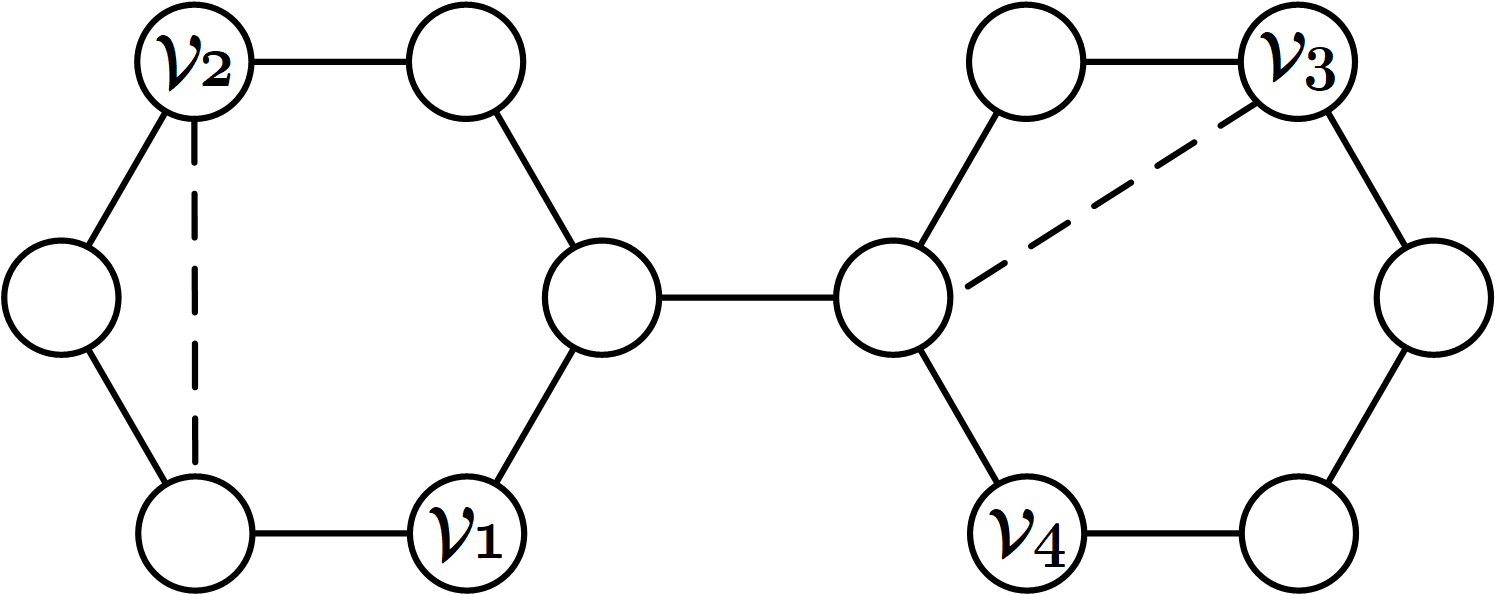}

\caption{\small Modified from \cite{zhang2022labeling}.  Solid lines represent the original edges and dashed lines represent negative edges sampled in our model architecture (for simplicity we do not draw all negative edges). 
}

\label{fig:polygen}
\end{figure}

Figure \ref{fig:polygen} serves to illustrate how the inclusion of negative samples within the forward pass of our model can potentially increase the expressiveness beyond traditional node-wise embedding approaches.  As observed in the figure, $v_2$ and $v_3$ are isophormic nodes in the original graph (solid lines).  However, when negative samples/edges are included the isomorphism no longer holds, meaning that link $(v_1,v_2)$ and link $(v_1,v_3)$ can be distinguished by a node-wise embedding method even without unique discriminating input features.  Moreover, when combined with the flexibility of learning to balance multiple negative sampling graphs as in (\ref{eq:energywithnegative-matrix-learnnegs-norm}) through the trainable weights $\{ \lambda_K^k \}$, the expressiveness of YinYanGNN becomes strictly greater than or equal to a vanilla nodewise embedding method (with equivalent capacity) that has no explicit access to the potential symmetry-breaking influence of negative samples.

Critically though, these negative samples are not arbitrarily inserted into our modeling framework.  Rather, \textit{they emerge by taking gradient steps (\ref{eq:grad_energywithnegative-final}) over a principled regularization factor (within (\ref{eq:energywithnegative-matrix-learnnegs-norm})) designed to push the embeddings of nodes sharing a negative edge apart during the forward pass}.  In a Section \ref{sec:random_edges} ablation we compare this unique YinYanGNN aspect with the alternative strategy of using random node features for breaking isomorphisms.

We conclude here by remarking that the goal of YinYanGNN, and the supporting analysis of this section, is not to explicitly match the expressiveness of edge-wise link prediction methods; even in principle this is not feasible given the additional information edge-wise methods have access to.  Instead, the guiding principle of YinYanGNN is more aptly distilled as follows: conditioned on the limitations of a purely node-wise link prediction architecture, to what extent can we improve model expressiveness (and ultimately accuracy) without compromising the scalability which makes node-wise methods attractive in the first place.

\section{Experiments}

\begin{table*}[htbp]
    \centering
    \setlength{\tabcolsep}{.8mm}

    \caption{ Results on link prediction benchmarks. Baseline results are cited from prior work \cite{chamberlain2023graph,yin2023surel} and the OGB leaderboard (comparisons with additional baselines can be found in the Appendix \ref{sec:appendix-additionalresults}). "-" means not reported. The format is average score $\pm$ standard deviation. The best results are bold-faced and underlined. OOM means out of GPU memory.}\label{tab:main_results}
\vskip 0.15in
\resizebox{\textwidth}{!}{
    \begin{tabular}{l|cccccccc}
    \toprule
         &&
         \textbf{Cora} &  
         \textbf{Citeseer} & 
         \textbf{Pubmed} &
         \textbf{Collab} &
         \textbf{PPA} &
         \textbf{Citation2} 
         &\textbf{DDI} 
         \\
\midrule
          &  &

          HR@100 &
          HR@100 & 
          HR@100 &
          HR@50 &
          HR@100 &
          MRR 
          &HR@20
         \\ 
         
         \midrule

        \multirow{7}*{\tabincell{l}{Edge-wise \\
                        GNNs}
                        }&\textbf{SEAL} & 
        $81.71{\scriptstyle \pm 1.30}$& 
        $83.89{\scriptstyle \pm 2.15}$ & 
        $75.54{\scriptstyle \pm 1.32}$&
        $64.74{\scriptstyle \pm 0.43}$& 
        $48.80{\scriptstyle \pm 3.16}$& 
        $87.67{\scriptstyle \pm 0.32}$
        &$30.56{\scriptstyle \pm 3.86}$
        \\ 
        
         ~&\textbf{NBFnet} & 
         $71.65 {\scriptstyle \pm 2.27}$&
         $74.07 {\scriptstyle \pm 1.75}$&
         $58.73 {\scriptstyle \pm 1.99}$&
         OOM&
         OOM&
         OOM
         &$4.00 {\scriptstyle \pm 0.58}$
         \\  
        ~&\textbf{Neo-GNN} & 
        $80.42 {\scriptstyle \pm 1.31} $ &
        $84.67{\scriptstyle \pm 2.16}$ &
        $73.93{\scriptstyle \pm 1.19}$ &
        $57.52 {\scriptstyle \pm 0.37}$& 
        $49.13{\scriptstyle \pm 0.60}$& 
        $87.26{\scriptstyle \pm 0.84}$
        &$63.57{\scriptstyle \pm 3.52}$ 
        \\ 
      ~&\textbf{GDGNN} &$ - $&$ - $&$ - $&
         $54.74{\scriptstyle \pm 0.48}$& 
         $45.92{\scriptstyle \pm 2.14}$	& 
           $86.96 {\scriptstyle \pm 0.28}$  &
         $ - $
         \\
          ~&  \textbf{SUREL} &$ - $&$ - $&$ - $&
         $63.34{\scriptstyle \pm 0.52}$& 
         $53.23{\scriptstyle \pm 1.03}$	& 
           $\underline{\mathbf{89.74 {\scriptstyle \pm 0.18}}}$  &
         $ - $
         \\
        ~&   \textbf{SUREL+} &$ - $&$ - $&$ - $&
         $64.10{\scriptstyle \pm 1.06}$& 
         $54.32{\scriptstyle \pm 0.44}$	& 
           $88.90 {\scriptstyle \pm 0.06}$  &
         $ - $
         \\
         ~&\textbf{BUDDY} & 
         $88.00{\scriptstyle \pm 0.44}$&
         ${92.93{\scriptstyle \pm 0.27}}$&
         $74.10{\scriptstyle \pm 0.78}$&
         ${65.94{\scriptstyle \pm 0.58}}$&
         $49.85{\scriptstyle \pm 0.20}$& 
         $87.56{\scriptstyle \pm 0.11}$
         &$78.51{\scriptstyle \pm 1.36}$ 
         \\
         \midrule
         \multirow{3}*{\tabincell{l}{Node-wise \\
                        GNNs}}&\textbf{GCN} & 
        $66.79 {\scriptstyle \pm 1.65}$& 
        $67.08 {\scriptstyle \pm 2.94}$&
        $53.02 {\scriptstyle \pm 1.39}$& 
        $44.75 {\scriptstyle \pm 1.07}$&
        $18.67 {\scriptstyle \pm 1.32}$&
        $84.74 {\scriptstyle \pm 0.21}$
        &$37.07 {\scriptstyle \pm 5.07}$
         \\
        ~&\textbf{SAGE} & 
        $55.02 {\scriptstyle \pm 4.03}$& 
        $57.01 {\scriptstyle \pm 3.74}$& 
        $39.66 {\scriptstyle \pm 0.72}$& 
        $48.10 {\scriptstyle \pm 0.81}$& 
        $16.55 {\scriptstyle \pm 2.40}$&
        $82.60 {\scriptstyle \pm 0.36}$
        &$53.90 {\scriptstyle \pm 4.74}$ 
        \\ 
 ~ &\textbf{YinYanGNN } &
         $\underline{\mathbf{93.83{\scriptstyle \pm 0.78}}}$&
         $\underline{\mathbf{94.45{\scriptstyle \pm 0.53}}}$&
         $\underline{\mathbf{90.73{\scriptstyle \pm 0.40}}}$&
         $\underline{\mathbf{66.10{\scriptstyle \pm 0.20}}}$&
         $\underline{\mathbf{54.64{\scriptstyle \pm 0.49}}}$& 	
         $86.21 {\scriptstyle \pm 0.09}$ & 
         $\underline{\mathbf{80.92{\scriptstyle \pm 3.35}}}$

         \\\bottomrule
\end{tabular}
}
\end{table*}
\subsection{Experimental Setup}
\paragraph{Datasets and Evaluation Metrics.} We evaluate YinYanGNN for link prediction on Planetoid datasets: Cora~\cite{mccallum2000automating}, Citeseer~\cite{sen2008collective}, Pubmed~\cite{namata2012query}, and Open Graph Benchmark (OGB) link prediction datasets~\cite{hu2020open}: ogbl-collab, ogbl-PPA, ogbl-Citation2 and ogbl-DDI. Planetoid represents classic citation network data, whereas OGB involves challenging, multi-domain, diverse benchmarks involving large graphs. Detailed statistics are summarized in the Appendix \ref{sec:appendix-experimentaldetails}. We adopt the hits ratio @k(HR@k) as the main evaluation metric as in ~\cite{chamberlain2023graph} for Planetoid datasets. This metric computes the ratio of positive edges ranked equal or above the $k$-th place out of candidate negative edges at test time. We set $k$ to 100 for these three datasets. For OGB datasets, we follow the official settings. Note that the metric for ogbl-Citation2 is Mean Reciprocal Rank (MRR), meaning the reciprocal rank of positive edges among all the negative edges averaged over all source nodes.   Finally, we choose the test results based on the best validation results. We also randomly select 10 different seeds and report average results and standard deviation for all datasets. For implementation details and hyperparameters, please refer to  Appendix \ref{sec:appendix-experimentaldetails}. 

\paragraph{Baseline Models.} To calibrate the effectiveness of our model, in this section we conduct comprehensive comparisons with node-wise GNNs: GCN~\cite{kipf2017} and GraphSage~\cite{Hamilton2017}, and edge-wise GNNs: SEAL~\cite{zhang2018link}, NeoGNN~\cite{yun2021neo}, NBFnet~\cite{zhu2021neural}, BUDDY~\cite{chamberlain2023graph}), GDGNN~\cite{NEURIPS2022_2708a065}, SUREL~\cite{yin2022algorithm}, and SUREL+~\cite{yin2023surel}.  We differ to Appendix \ref{sec:appendix-additionalresults} additional experiments spanning traditional link prediction heuristic methods: Common Neighbors (CN)~\cite{barabasi1999emergence}, Adamic-Adar (AA)~\cite{adamic2003friends} and Resource Allocation (RA)~\cite{Zhou_2009}, non-GNN or graph methods: MLP, Node2vec~\cite{grover2016node2vec}, and Matrix-Factorization (MF)~\cite{koren2009matrix}), knowledge graph (KG) methods: TransE~\cite{bordes2013translating}, ComplEx~\cite{trouillon2016complex}, and DistMult~\cite{yang2015embedding},  additional node-wise GNNs: 
 GAT~\cite{velivckovic2017graph}, GIN~\cite{xu2018powerful}, JKNet~\cite{xu2018representation}, and GCNII~\cite{chen2020simple}, and finally distillation methods.  
 
 Overall, for more standardized comparisons, we have chosen baselines based on published papers with open-source code and exclude those methods relying on heuristic augmentation strategies like anchor distances, or non-standard losses for optimization that can be applied more generally.  Section \ref{appendix:recentleaderboardmethods} also addresses additional link prediction baseline methods from the OGB leaderboard, including complementary methods that can be applied to YinYanGNN to further improve performance with additional complexity.

\begin{figure}[htbp]
\centering
\includegraphics[width=\linewidth]{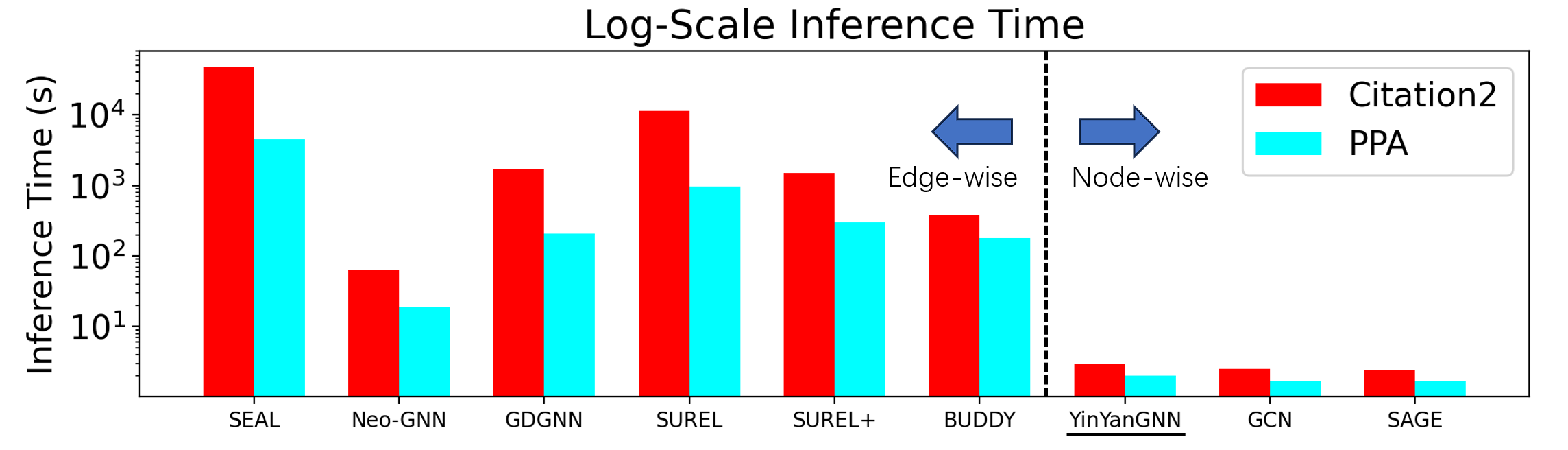}

\caption{\textit{Log-scale inference time}. Citation2, PPA are the two largest OGB link prediction graphs.}

\label{fig:timeversusperfomanceppa}
\end{figure}
\subsection{Link Prediction Results}
\label{sec:experiments-results}
Accuracy results are displayed in Table \ref{tab:main_results}, where we observe that YinYanGNN achieves the best performance on 6 out of 7 datasets (while remaining competitive across all 7) even when compared against more time-consuming or inference-inefficient edge-wise methods. And we outperform node-wise methods by a large margin (including several others shown in Appendix \ref{sec:appendix-additionalresults}), demonstrating that YinYanGNN can achieve outstanding predictive accuracy without sacrificing efficiency. Similarly, as included in the Appendix \ref{sec:appendix-additionalresults}, YinYanGNN also outperforms a variety of non-GNN link prediction baselines.

\subsection{Efficiency Analysis}
\label{sec:sec-inference}
\textbf{Inference Time.} We next present comparisons in terms of inference speed, which is often the key factor determining whether or not a model can be deployed in real-world scenarios.  For example, in an online system, providing real-time recommendations may require quickly evaluating a large number of candidate links.  Figure \ref{fig:timeversusperfomanceppa} reports the results, again relative to both edge- and node-wise baseline models.  Noting the log-scale time axis, from these results we observe that YinYanGNN is significantly faster than all the edge-wise models, and nearly identical to the fellow node-wise approaches as expected.  And for the fastest edge-wise model, Neo-GNN, YinYanGNN is still simultaneously more efficient (Figure \ref{fig:timeversusperfomanceppa}) and also much more accurate (Table \ref{tab:main_results}). Additional related details and inference-time results are deferred to Appendix \ref{sec:appendix-additionalresults}  We also remark that, as a node-wise model, the efficiency of YinYanGNN can be further improved to sublinear complexity using Flashlight~\cite{wang2022flashlight}; however, at the time of submission public Flashlight code was not available, so we differ consideration to future work. 

\textbf{Decoding Time.} Besides full inference times, in practice, we can save the embeddings output from the encoder and only compare the decoding step, which is a key differentiator. To this end, we compare the decoding speed of YinYanGNN with SEAL, a highly-influential edge-wise model, and BUDDY, which represents a strong baseline with a good balance between accuracy and speed for edge-wise GNNs.  We conduct the experiments according to \cite{wang2022flashlight}. Table \ref{tab:decoding_speed} displays the results, where our model achieves the fastest performance compared with these two baselines. We also note that although the time complexity of BUDDY is linear in $n$ (which is the same as our model without Flashlight), the different scaling coefficients caused by the required subgraph information calculation still make BUDDY much slower than YinYanGNN.
\begin{table*}[htbp]
    \centering
    \setlength{\tabcolsep}{.8mm}
    \caption{Decoding speed at inference time on Citation2 as measured by nodes/sec.~(higher is better).}\label{tab:decoding_speed} 
\vskip 0.15in
\small{
    \begin{tabular}{lccccc}
    \toprule
          &\textbf{SEAL} & 
         \textbf{BUDDY} &  
         \textbf{YinYanGNN} &
         \textbf{YinYanGNN(dot)}&
         \textbf{YinYanGNN(dot MIPS)}
         \\
         
         \midrule
            nodes/second &
            0.00024 & 
         0.005&
         0.6&
         1.3 &
         25
        \\\bottomrule
        
\end{tabular}
}
\end{table*}

Finally, although the code for Flashlight is not yet publicly available, we can mimic the acceleration it will produce as follows.  As Flashlight is a repeated version of MIPS, which can be applied to dot product prediction, we also run MIPS on our model with dot product as the decoder to see how fast MIPS can accelerate. We run the experiments on CPU and achieve a 20x speedup (compare last two columns of Table \ref{tab:decoding_speed} with and without MIPS).

\textbf{Training Time.} We note that as is often the case for link prediction, our focus is on efficient inference, since the number of candidate node pairs grows quadratically in the graph size.  That being said, we nonetheless provide a representative comparison here of training times. Specifically, we measure the training times on OGBL-PPA using identical training sets and platforms. BUDDY requires approximately 3 hours, whereas our YinYanGNN model accomplishes the same training task in approximately 12.5 minutes, underscoring a substantial discrepancy in training efficiency.

\subsection{Comparisons with Recent OGB Leaderboard Models} \label{appendix:recentleaderboardmethods}

\textbf{General Comments.} After the edge-wise SEAL algorithm was originally published, not many new scalable node-wise alternatives have been proposed, likely because it has been difficult to achieve competitive performance.  For example, there are actually very few node-wise entries atop the OGB link prediction leaderboard; almost all are less efficient edge-wise methods or related. 

In fact, over the course of preparing this submission, there was no model of any kind with OGB leaderboard performance above ours on all datasets, and \textit{only a single published node-wise model on the leaderboard above ours on any dataset}: this is model CFLP w/ JKNet~\cite{zhao2022learning}, and only when applied  to DDI.  But when we examine the CFLP paper, we find that their performance is much lower than YinYanGNN on other non-OGB datasets as shown in Table \ref{tab:CFIP}.
\begin{table}[htbp]
    \centering
    \setlength{\tabcolsep}{.8mm}
    \caption{Comparison of YinYanGNN with CFLP.}\label{tab:CFIP} 
\vskip 0.15in
\small{
    \begin{tabular}{lccc}
    \toprule
         &
         \textbf{Cora} &  
         \textbf{Citeseer} & 
         \textbf{Pubmed} 
         \\
\midrule
           \textbf{Model} &

          HR@20 &
          HR@20 & 
          HR@20 
         \\ 
         
         \midrule
          
         \textbf{CFLP w/ JKNet}    & $65.57 {\scriptstyle \pm 1.05}$& 
        $68.09 {\scriptstyle \pm1.49 }$& 
        $44.90 {\scriptstyle \pm2.00}$      \\
         \textbf{YinYanGNN} &$\mathbf{84.10{\scriptstyle \pm 3.23}}$&
         $\mathbf{90.52{\scriptstyle \pm 0.72}}$&
         $\mathbf{72.38{\scriptstyle \pm 5.31}}$\\

        \bottomrule      
\end{tabular}
}
\end{table}

\textbf{Neural Common Neighbors (NCN).} Another recent approach with strong leaderboard performance worth mentioning here is NCN~\cite{wang2023neural}, which relies on a node-wise encoder as with YinYanGNN.  However, the key feature of NCN is a decoder based on collecting common neighbors of nodes spanning each candidate link, which leads to a significantly higher inference cost.  Even so, the NCN decoder itself can be easily be integrated with any node-wise encoder, including YinYanGNN, to potentially increase accuracy, albeit with a much higher inference time.  Results on OGB-Citation2 (the largest OGB link prediction task) are listed in Table \ref{tab:NCNCcomparison}. These results demonstrate that YinYanGNN with an NCN decoder can actually outperform both the original NCN MRR from \cite{wang2023neural} as well as all of the edge-wise models in Table \ref{tab:main_results}.  The main downside is that inference time increases by $\sim 40\times$ (see Table \ref{tab:NCNCcomparison}) over YinYanGNN in its original form.  This further highlights the challenge of achieving the highest accuracy while preserving the full efficiency of purely node-wise models.

\begin{table}[htbp]
    \centering
    \setlength{\tabcolsep}{.8mm}
    
    \caption{Better Performances on Citation2 with NCN}
\vskip 0.15in
\small{
    \begin{tabular}{l|cc}
    \toprule
    &
        
         \textbf{Inference Time (s)} &
         \textbf{Citation2 (MRR)} 
     
         \\
\midrule
\textbf{NCN}~\cite{wang2023neural} &
         
         116& 	
         $88.64 {\scriptstyle \pm 0.14}$ 

         \\ 
\textbf{YinYanGNN} &
        
         3& 	
         $86.21 {\scriptstyle \pm 0.09}$ \\
\textbf{YinYanGNN w/ NCN} &
        
         117 & 	
         $\mathbf{90.03 {\scriptstyle \pm 0.02}}$ 

         \\\bottomrule
\end{tabular}\label{tab:NCNCcomparison}
}
\end{table}

\textbf{PLNLP and GIDN.} Beyond the above, we conclude by commenting on two additional leaderboard methods.  First, there is an additional, unpublished node-wise model on the leaderboard called PLNLP~\cite{wang2021pairwise} that does lead to strong accuracy on both Collab and DDI (two relatively small datasets), but the main contribution is a new training loss that is orthogonal to our method.  In fact any node-wise model, including ours, could be trained with their loss to improve performance.  The only downside is that the PLNLP approach seems to be dataset dependent, with lesser performance on other benchmarks, e.g., PLNLP only achieves 32.38 Hits@100 on PPA and 84.92 MRR on Citation2.  Additionally, the GIDN model~\cite{wang2022gidn} follows the same loss as PLNLP and again achieves good results restricted to the smaller-scale Collab and DDI benchmarks (no other results are shown).  However, the companion GIDN paper \cite{wang2022gidn} is unpublished and extremely brief, with just a couple short paragraphs explaining the method and no assessment of inference time complexity.  Hence we are unsure of how this model compares against node-wise models when applied to large-scale datasets as is our focus.

\section{Ablations}
\label{sec:appendix-ablation}

\subsection{Ablation over Negative Sampling.} 
\label{sec:ablationovernegativesampling}
As the integration of both positive and negative samples within a unified node-wise embedding framework is a critical component of our model, in Table \ref{tab:ablation_negative} we report results both with and without the inclusion of the negative sampling penalty in our lower-level embedding model from (\ref{eq:updates_energywithnegative-final}).  Clearly the former displays notably superior performance as expected.

\begin{table}[htbp]
    \centering
    \setlength{\tabcolsep}{.8mm}
    \caption{Performance of YinYanGNN with or without  negative sampling in model architecture.}\label{tab:ablation_negative} 
\vskip 0.15in
\resizebox{0.5\textwidth}{!}{
    \begin{tabular}{l|cccccccc}
    \toprule
         &
         \textbf{Pubmed} &
         \textbf{Collab} &
         \textbf{PPA} &
         \textbf{Citation2} 
         &\textbf{DDI} 
         \\
\midrule
           YinYanGNN &

            HR@100 &
          HR@50 &
          HR@100 &
          MRR 
          &HR@20
         \\ 
         
         \midrule
          
         W/O Negative &
         $86.70{\scriptstyle \pm 3.23}$& 
         $63.02{\scriptstyle \pm 0.44}$& 
         $49.36{\scriptstyle \pm 2.91}$	& 
           $83.45 {\scriptstyle \pm 0.21}$  &
         $57.80{\scriptstyle \pm 5.39}$
         \\
         \hline
          W/ Negative &
        $\mathbf{90.73{\scriptstyle \pm 0.40}}$& 
         $\mathbf{66.10{\scriptstyle \pm 0.20}}$&
         $\mathbf{54.64{\scriptstyle \pm 0.49}}$& 	
         $\mathbf{86.21 {\scriptstyle \pm 0.09}}$  &  $\mathbf{80.92{\scriptstyle \pm 3.35}}$\\
        \bottomrule      
\end{tabular}
}
\end{table}

\subsection{Learning Negative Graphs}

 One benefit of learning $\{ \lambda_K^k \}$ is that it allows us to better match the performance of larger $K$ with fixed $\lambda_K$ using a smaller $K$ but with learnable $\{ \lambda_K^k \}$.  The latter is more practical given the greater efficiency of a smaller $K$.  Figures \ref{fig:subfig1} and \ref{fig:subfig2} demonstrate this phenomena, whereby a learnable $\{\lambda_K^k\}$ achieves better accuracy for smaller values of $K$ on Cora and Pubmed datasets (for larger $K$ performance is similar).   
 
 For bigger datasets where using larger $K$ can be prohibitively expensive, this capability can be exploited to achieve higher accuracy. For example, on PPA and Citation2 a larger $K$ may well improve accuracy, but this is not computationally cheap without sampling or other approximations.  In contrast, we can improve performance with small $K$ just by learning $\{ \lambda_K^k \}$ as shown in Table \ref{tab:ablation_learnable}. (In contrast, for dense graphs a larger $K$ is less likely to be helpful, in part because the extra edges will make the negative graph much more dense and more similar to complete graph, which results in the loss of structural information.  Not surprisingly then, we found that learning $\{ \lambda_K^k \}$ did not improve performance on the dense graph DDI dataset where $K=1$ was optimal.)

\vspace{-2mm}
\begin{figure}[htbp]
  \centering
  \begin{minipage}[b]{0.45\textwidth}
    \centering
    \includegraphics[width=\textwidth]{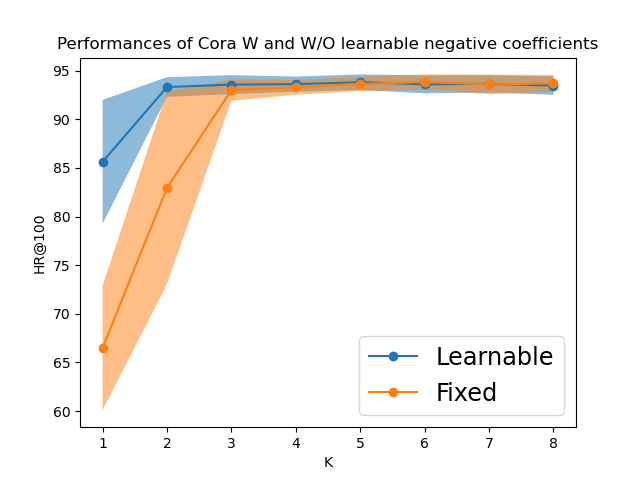}
    \caption{Learnable or Fixed $\{ \lambda_K^k \}$ performances on Cora, standard deviation shown by backgrounds.}
    \label{fig:subfig1}
  \end{minipage}
  \hfill
  \begin{minipage}[b]{0.45\textwidth}
    \centering
    \includegraphics[width=\textwidth]{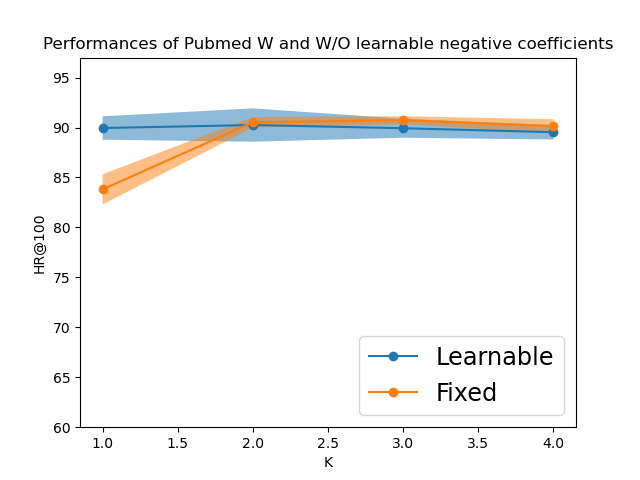}
    \caption{Learnable or Fixed $\{ \lambda_K^k \}$ performances on Pubmed, standard deviation shown by backgrounds.}
    \label{fig:subfig2}
  \end{minipage}
  \label{fig:figure}
\end{figure}

\begin{table}[htbp]
    \centering
    \setlength{\tabcolsep}{.8mm}
    \label{tab:learnablelambda}
    \caption{Better Performances on PPA and Citation2 with learnable $\{ \lambda_K^k \}$ }\label{tab:ablation_learnable} 
\vskip 0.15in
\small{
    \begin{tabular}{l|cc}
    \toprule
    &
        
         \textbf{PPA} &
         \textbf{Citation2} 
     
         \\
\midrule
           &

          HR@100 &
          MRR 
      
         \\ 
\textbf{fixed $\{ \lambda_K^k \}$} &
         
         $52.32{\scriptstyle \pm 0.24}$& 	
         $85.80 {\scriptstyle \pm }0.08$ 

         \\ 
\textbf{learnable $\{ \lambda_K^k \}$} &
        
         $\mathbf{54.64{\scriptstyle \pm 0.49}}$& 	
         $\mathbf{86.21 {\scriptstyle \pm 0.09}}$ 

         \\\bottomrule
\end{tabular}
}
\end{table}

\subsection{Different Negative Samplers}

As shown in Table \ref{tab:ablation-sampler}, the YinYanGNN architecture is tolerant to the use of different negative samplers, from sampling negative edges uniformly in the graph (namely global uniform
sampler) to sampling negative edges by fixing the source node and sampling negative destination nodes (namely source uniform sampler). 
\begin{table}[htbp]
    \centering
    \setlength{\tabcolsep}{.8mm}
    \caption{Performance of YinYanGNN with different samplers. Global means global uniformly sampler. Source means source uniformly sampler. }\label{tab:ablation-sampler} 
\vskip 0.15in
\small{
    \begin{tabular}{lccc}
    \toprule
         &
         \textbf{Cora} &  
         \textbf{Citeseer} & 
         \textbf{Pubmed} 
         \\
\midrule
          Sampler &
          HR@100 &
          HR@100 & 
          HR@100 
         \\ 
         
         \midrule
                    
         Global &
         $92.63{\scriptstyle \pm 1.17}$&
         $93.43{\scriptstyle \pm 0.49}$&
         $90.54{\scriptstyle \pm 0.49}$
          \\\hline
          
         Source &
         $92.69{\scriptstyle \pm 0.21}$&
         $94.45{\scriptstyle \pm 0.53}$&
         $90.40{\scriptstyle \pm 0.28}$

        \\\bottomrule
         
\end{tabular}
}
\end{table}

\subsection{Negative sampling in the forward pass of common GNNs} \label{sec:neg_sampling_gnn}
As we discussed in Section \ref{sec:incorporatingnegative}, heuristically embedding negative samples within an existing GNN architecture is another possible strategy. But in this way the negative sampling terms are not integrated by a principled energy function. We follow the analysis of negative sampling in energy function, and modify the GCN architecture to incorporate negative sampling in the forward model as follows
\begin{align}
    \label{eq:gcn-negative}
    Y^{(t+1)}=\text{ReLU}\left[\left(\tilde{A}-\frac{\lambda_K}{K}\tilde{A^{-}}\right)Y^{(t)}W^{(t)} \right],
\end{align}
where $W^{(t)}$ are the layer-wise weights and $K$ is the negative edge number for each positive edge. Also, $\lambda_K$ is the hyperparameter to control the impact of negative edges as before. As shown in Table \ref{tab:ablation GCN with sampled negative edges}, negative sampling is also useful in some datasets for GCNs, but the performance is still not as good compared to YinYanGNN, where the negative samples are anchored to the proposed energy function.  
\begin{table}[htbp]
    \centering
    \setlength{\tabcolsep}{.8mm}
    \caption{Performance of GCN with negative edge sampling, compared with vanilla GCN and YinYanGNN.}\label{tab:ablation GCN with sampled negative edges} 
\vskip 0.15in
\small{
    \begin{tabular}{lccc}
    \toprule
         &
         \textbf{Cora} &  
         \textbf{Citeseer} & 
         \textbf{Pubmed} 
         \\
\midrule
          model &
          HR@100 &
          HR@100 & 
          HR@100 
         \\ 
         
         \midrule
                    GCN & 
        $66.79 {\scriptstyle \pm 1.65}$& 
        $67.08 {\scriptstyle \pm 2.94}$&
        $53.02 {\scriptstyle \pm 1.39}$ \\
         GCN W/ negative &
         $76.15{\scriptstyle \pm 0.69}$&
         $67.80{\scriptstyle \pm 3.74}$&
         $74.96{\scriptstyle \pm 1.15}$
          \\\midrule
                   YinYanGNN &
         $\mathbf{93.83{\scriptstyle \pm 0.78}}$&
         $\mathbf{94.45{\scriptstyle \pm 0.53}}$&
         $\mathbf{90.73{\scriptstyle \pm 0.40}}$
          \\
    
        \bottomrule
         
\end{tabular}
}
\end{table}

\subsection{Random Features versus Negative edges} \label{sec:random_edges}
In Section \ref{sec:roleofnegative}, we show that random edges can in principle break the stated isomorphism. However, another potential way to accomplish this effect is to use random features that disambiguate each node. However, this can introduce undesirable auxilary effects, which become apparent when we analyze the resulting energy function when applying random features instead of negative sampling. 

When we add random features to (\ref{eq:energywithnegative-matrix}) we obtain the modified form
\begin{align}
\label{eq:twirls-energy-randomfeature}
    \ell_{node_r}(Y)=\|Y-f(X+X_{R};W)\|^2_{F}+\lambda \text{tr}[Y^TLY].
\end{align}
For simplicity of illustration, we consider an $f$ that obeys the distributive property, i.e., 
We then compute the gradient
\begin{align}
\label{eq:twirls-gradients-randomfeature}
        \frac{\partial \ell_{node_r}}{\partial Y} &=2(Y-f(X+X_{R};W))+ 2\lambda LY \\\nonumber
        &=2(Y-f(X;W))+ 2\lambda LY -2f(X_{R};W).
\end{align}
From these expressions we observe that random features merely introduce random blurriness to the gradient updates; they do not emerge from an otherwise useful regularization factor.  In contrast, with YinYanGNN, the negative samples create gradients that push the node embeddings of nodes that do not share a true edge apart, a useful form of structural regularization that is independent of any mechanism to break isomorphisms per se.


We support these points with additional experiments 
in Table \ref{tab:ablation-randomfeat}, where random features are introduced and compared against our original YinYanGNN model. From these results we observe that random features can actually degrade performance and introduce instabilities as evidenced by the reduced accuracy and larger standard deviations.


\begin{table}[htbp]
    \centering
    \setlength{\tabcolsep}{.8mm}
    \caption{Performance of YinYanGNN variants. }\label{tab:ablation-randomfeat} 
\vskip 0.15in
\small{
    \begin{tabular}{lccc}
    \toprule
         &
         \textbf{Cora} &  
         \textbf{Citeseer} & 
         \textbf{Pubmed} 
         \\
\midrule
          YinYanGNN &
          HR@100 &
          HR@100 & 
          HR@100 
         \\ 
         
         \midrule
         W/O Negative edges&
         $91.72{\scriptstyle \pm 0.33}$&
         $92.61{\scriptstyle \pm 0.31}$&
         $86.70{\scriptstyle \pm 3.23}$\\
                 W/   Random features &
         $91.25{\scriptstyle \pm 1.41}$&
         $89.69{\scriptstyle \pm 0.78}$&
         $79.82{\scriptstyle \pm 5.15}$
          \\
                 W/  Negative edges &
         $\mathbf{93.83{\scriptstyle \pm 0.78}}$&
         $\mathbf{94.45{\scriptstyle \pm 0.53}}$&
         $\mathbf{90.73{\scriptstyle \pm 0.40}}$
          \\
          

        \bottomrule
         
\end{tabular}
}
\end{table}


\subsection{Proportion of Negative Graphs}

And finally, in Table \ref{tab:ablation of negative proportions}, we examine how YinYanGNN performance is impacted as we vary the relative proportion assigned to negative graphs, which amounts to the relative weighting of $\lambda$ to $\lambda_K$.  Specifically, we choose $\lambda = 1$ and vary $\lambda_K$ from $0.1$ to $10$ (in each case $\lambda_K$ is fixed, not learnable) and train YinYanGNN on small datasets Cora, Citeseer, and Pubmed as well as large dataset Citation2.  Performance is generally best with $\lambda_K \in [0.1,1]$ across all datasets, with the smaller datasets favoring $\lambda_K=1$ and the largest $\lambda_K = 0.1$.



\begin{table}[htbp]
    \centering
    \setlength{\tabcolsep}{.8mm}
    \caption{Performance of YinYanGNN with different negative graph proportions.}\label{tab:ablation of negative proportions} 
\vskip 0.15in
\small{
    \begin{tabular}{lcccc}
    \toprule
         &
         \textbf{Cora} &  
         \textbf{Citeseer} & 
         \textbf{Pubmed} &
         \textbf{Citation2}
         \\
\midrule
          $\lambda:\lambda_K$ &
          HR@100 &
          HR@100 & 
          HR@100 & 
          MRR 
         \\ 
         
         \midrule
                     1:0.1 &
         $93.26{\scriptstyle \pm 0.95}$ &
        $92.44{\scriptstyle \pm 0.59}$ &
        $87.08{\scriptstyle \pm 4.45}$ &
         $\mathbf{85.80{\scriptstyle \pm 0.08}}$
          \\
                   1:1 &
         $\mathbf{93.83{\scriptstyle \pm 0.78}}$&
         $\mathbf{94.45{\scriptstyle \pm 0.53}}$&
         $\mathbf{90.73{\scriptstyle \pm 0.40}}$&
         $81.76{\scriptstyle \pm 0.08}$
          \\
          1:3 &
         $92.23{\scriptstyle \pm 1.08}$ &
        $92.42{\scriptstyle \pm 1.10}$ &
        $63.22{\scriptstyle \pm 2.63}$ &
         $67.65{\scriptstyle \pm 0.08}$
          \\
          1:5 &
        $91.54{\scriptstyle \pm 1.01}$ &
         $92.45{\scriptstyle \pm 0.63}$&
        $63.20{\scriptstyle \pm 2.20}$ &
         $58.75{\scriptstyle \pm 0.03}$
          \\
                             1:10 &
        $87.31{\scriptstyle \pm 1.81}$ &
         $92.50{\scriptstyle \pm 0.34}$&
        $57.00{\scriptstyle \pm 1.24}$ &
         $49.79{\scriptstyle \pm 0.12}$
          \\
                              
        \bottomrule
         
\end{tabular}
}
\end{table}

\section{Conclusion}
In conclusion, we have proposed the YinYanGNN link prediction model that achieves accuracy on par with far more expensive edge-wise models, but with the efficiency of relatively cheap node-wise alternatives. This competitive balance is accomplished using a novel node-wise architecture that incorporates informative negative samples/edges into the design of the model architecture itself to increase expressiveness, as opposed to merely using negative samples for computing a training signal as in prior work.  Given the critical importance of inference speed in link prediction applications, YinYanGNN represents a promising candidate for practical usage.  In terms of limitations, we have not integrated our implementation with Flashlight for maximally accelerated inference, as public code is not yet available.

\section{Acknowledgments}
\noindent This work was supported by the National Key Research and Development Program of China (No.2022ZD0160102), and conducted while the first author was an intern at Amazon. 

\bibliography{reference,wipf_refs}
\bibliographystyle{IEEEtran}

\begin{IEEEbiography}[{\includegraphics[width=1in,height=1.25in,clip,keepaspectratio]{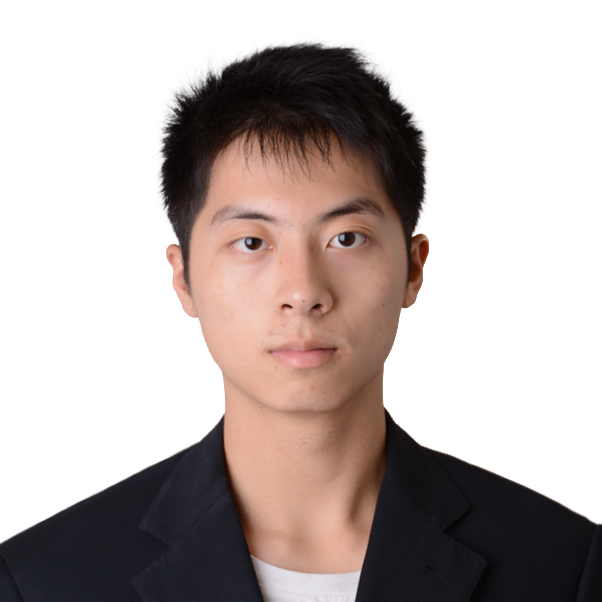}}]{Yuxin Wang}
received the BS degree from Fudan University and is currently a Phd student in Fudan University major in computer science. His main research interest is efficiency in graph neural networks and natural language process.\end{IEEEbiography}
\begin{IEEEbiography}[{\includegraphics[width=1in,height=1.25in,clip,keepaspectratio]{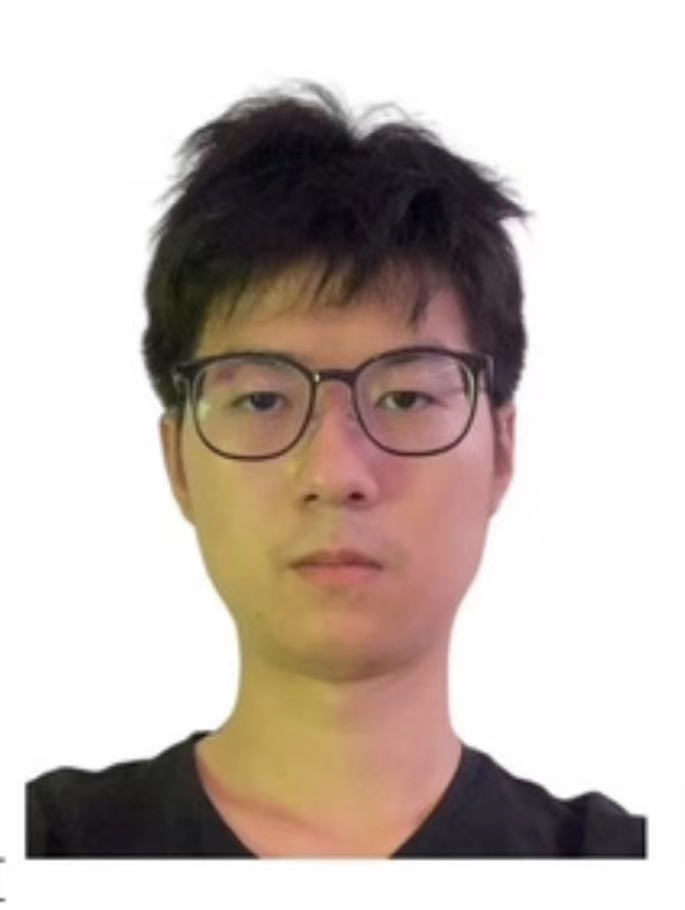}}]{Xiannian Hu}
 received the MS degree in computer science from Fudan University in 2024,  He is now working in TAOBAO \& TMALL GROUP for pricing strategy and AI marketing, interest in machine learning, causal inference, operations optimization.\end{IEEEbiography}

\begin{IEEEbiography}[{\includegraphics[width=1in,height=1.25in,clip,keepaspectratio]{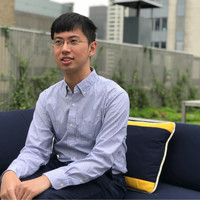}}]{Quan Gan}
is a Senior Applied Scientist at Amazon. His research interests include heterogeneous graph neural networks, algorithm unrolling, and the application of graph neural networks to domains such as tabular data and hypergraph networks. He obtained his Master's degree at New York University and Bachelor's degree at Fudan University.\end{IEEEbiography}

\begin{IEEEbiography}[{\includegraphics[width=1in,height=1.25in,clip,keepaspectratio]{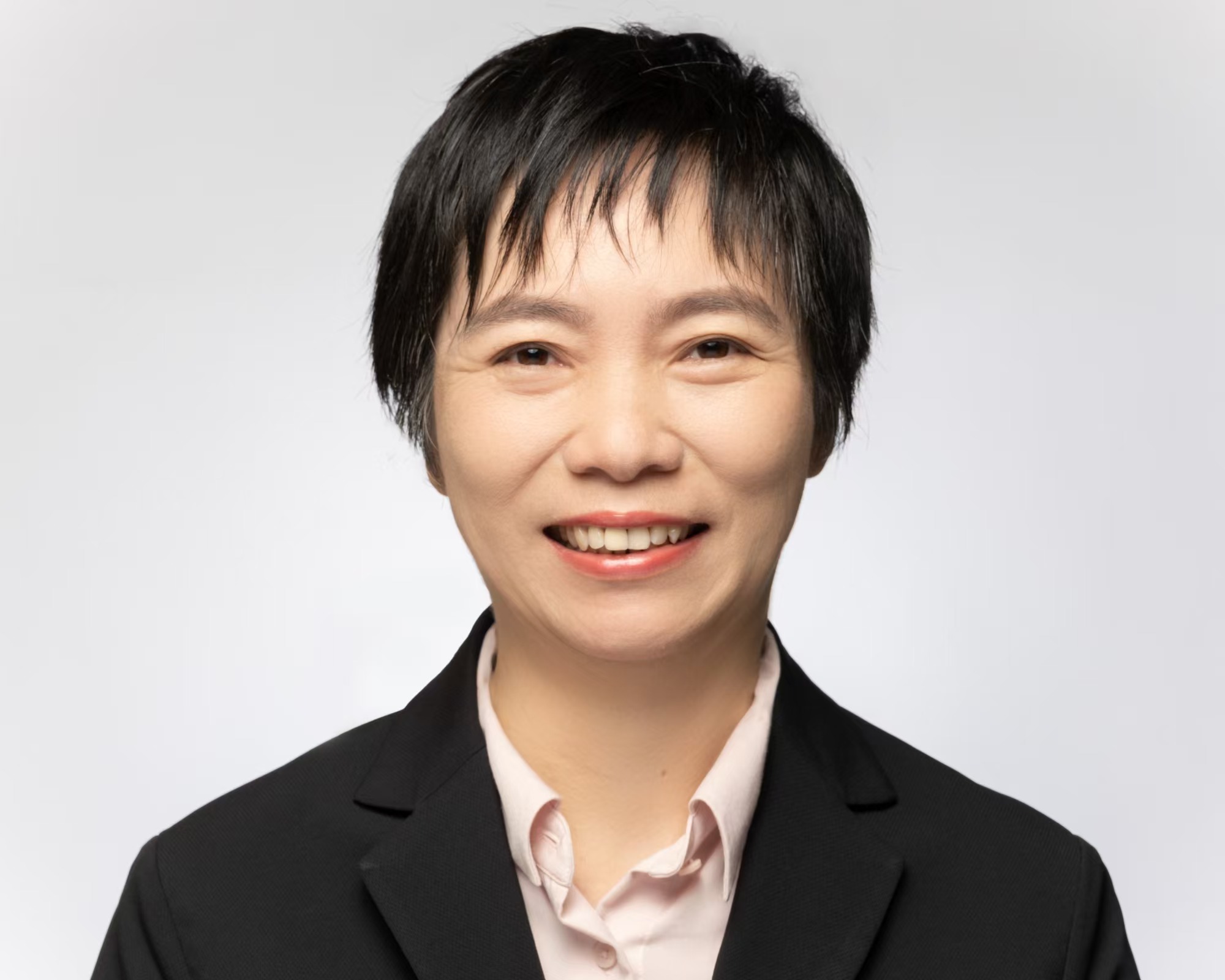}}]{Xuanjing Huang}
is currently a Professor at the School of Computer Science, Fudan University, Shanghai, China. She obtained the PhD degree in Computer Science from Fudan University in 1998. Her research interests focus on natural language processing and information retrieval, with a particular emphasis on sentiment analysis, information extraction, pre-trained language models, and the robustness and interpretability of NLP.\end{IEEEbiography}

\begin{IEEEbiography}[{\includegraphics[width=1in,height=1.25in,clip,keepaspectratio]{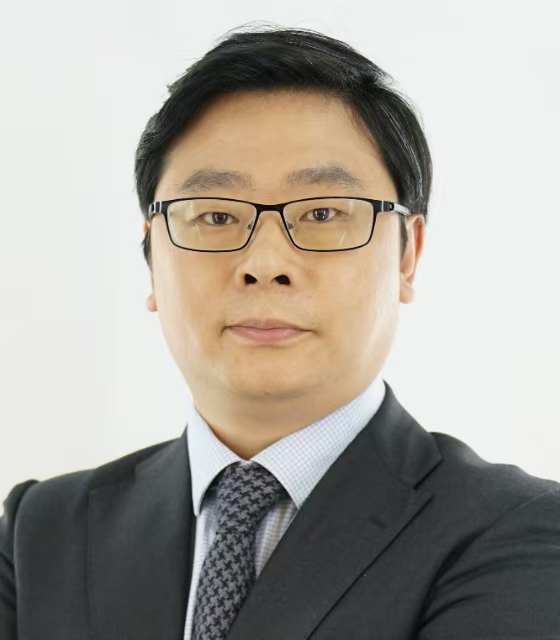}}]{Xipeng Qiu}
 is a professor at the School of Computer Science, Fudan University. He received his B.S. and Ph.D. degrees from Fudan University. His research interests include natural language processing and deep learning.\end{IEEEbiography}
\vspace{-15cm}
\begin{IEEEbiography}[{\includegraphics[width=1in,height=1.25in,clip,keepaspectratio]{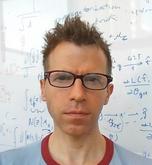}}]{David Wipf}
is a Principal Research Scientist at Amazon Web Services and an IEEE Fellow.  Prior to industry positions at Amazon and Microsoft Research, he received a B.S. degree with highest honors from the University of Virginia, and M.S. and Ph.D. degrees from the University of California, San Diego as an NSF Fellow in Vision and Learning in Humans and Machines. He was later an NIH Postdoctoral Fellow at the University of California, San Francisco. His current research interests include deep generative models and graph neural networks. He is the recipient of numerous fellowships and awards including the 2012 Signal Processing Society Best Paper Award.\end{IEEEbiography}

\newpage
\onecolumn
\section{Appendix}
\subsection{Additional Experimental Results}
\label{sec:appendix-additionalresults}
In this section we show further details of inference time results, and compare YinYanGNN with more diverse classical baselines, expressive node-wise GNNs, and distillation methods.
\subsubsection{Inference Time}

We show the inference time of different models on the two largest OGB-datasets in Table \ref{tab:inference-time}.
\begin{table*}[htbp]
    \centering
    \setlength{\tabcolsep}{.8mm}
    \caption{Inference time on test set. }\label{tab:inference-time} 
\vskip 0.15in
\small{
    \begin{tabular}{l|cccccc|ccc}
    \toprule
        Dataset  &\textbf{SEAL} & 
        \textbf{Neo-GNN} &
         \textbf{GDGNN} &
         \textbf{SUREL} &
         \textbf{SUREL+}&
        \textbf{BUDDY} & 
         \textbf{YinYanGNN}&
         \textbf{GCN}
         &
         \textbf{SAGE}
         \\
         
         \midrule
         PPA   &
           $\approx$ 4500 s  & 
           19 s  & 
         210 s &
        972 s &
         299 s &
        178 s &
         2 s&
            1.7 s&
            1.7 s \\
         Citation2   &
            $\approx$ 48000 s  & 
            63 s  & 
         1680 s &
         11367 s &
          1516 s  &
                385 s  &
         3 s &
           2.5 s&
             2.4 s
        \\\bottomrule
        
\end{tabular}
}
\end{table*}
\subsubsection{Impact of the Number of Negative Samples on Performance and Inference Time}
\label{sec:appendix-inference-on-K}
Although detailed inference time comparisons using PPA and Citation2 were shown in Figure \ref{fig:timeversusperfomanceppa}  and Table \ref{tab:inference-time}, we further explore how the number of negative samples can affect both inference time as well as the corresponding performance on larger graphs. From the results shown in Table \ref{tab:ablation_K_largegraphs} we observe that YinYanGNN is robust to the number of negative samples for large datasets. Hence to prioritize efficiency $K=1$ is a reasonable selection. 
\begin{table}[htbp]
    \centering
    \setlength{\tabcolsep}{.8mm}
    \caption{YinYanGNN performance and inference times versus $K$ on PPA and Citation2.}\label{tab:ablation_K_largegraphs} 

\vskip 0.15in
\small{
    \begin{tabular}{l|cccc}
    \toprule
        
         
     &
        
         \textbf{PPA} &\textbf{PPA} &
         
         \textbf{Citation2} & \textbf{Citation2} 
         \\
\midrule
         $K$  &

          HR@100 &Inference Time (s)&
          MRR &
      Inference Time (s)
         \\ 
         \midrule
1 &
         
         $54.64{\scriptstyle \pm 0.49}$& 	
         0.21 &
         $\mathbf{86.21 {\scriptstyle \pm 0.09}}$ 
         & 1.17

         \\ 
2 &
        
         $\mathbf{56.60{\scriptstyle \pm 0.38}}$& 
        0.29 &	
         $85.95 {\scriptstyle \pm 0.06}$ 
        & 1.35
         \\
         3 &
         
         $55.80{\scriptstyle \pm 0.42}$& 	
         0.35 &
         $85.85 {\scriptstyle \pm 0.05}$ 
      & 1.52
         \\ \bottomrule
\end{tabular}
}
\end{table}

\subsubsection{Non-GNN Approaches}

Herein we show results for classical approaches reported  from ~\cite{chamberlain2023graph} and OGB leaderboards, with the exception that results for MLP and Node2vec for Cora, Citeseer and Pubmed are obtained by our own implementation in Appendix \ref{sec:appendix-experimentaldetails}. These new baselines include traditional heuristic methods (Common Neighbors (CN)~\cite{barabasi1999emergence}, Adamic-Adar (AA)~\cite{adamic2003friends} and Resource Allocation (RA)~\cite{Zhou_2009} ), non-GNN methods (MLP, Node2vec~\cite{grover2016node2vec}, Matrix-Factorization (MF)~\cite{koren2009matrix}), and knowledge graph (KG) methods(TransE~\cite{bordes2013translating}, ComplEx~\cite{trouillon2016complex}, DistMult~\cite{yang2015embedding}).  YinYanGNN performance exceeds that of these methods.

\begin{table*}[htbp]
    \centering
    \setlength{\tabcolsep}{.8mm}
    
\caption{Results on link prediction benchmarks. Baseline results are cited from previous works \cite{chamberlain2023graph,yin2023surel} and the OGB leaderboard, with a few exceptions mentioned in the text.}
   
\vskip 0.15in
\resizebox{\textwidth}{!}{
    \begin{tabular}{l|cccccccc}
    \toprule
         &&
         \textbf{Cora} &  
         \textbf{Citeseer} & 
         \textbf{Pubmed} &
         \textbf{Collab} &
         \textbf{PPA} &
         \textbf{Citation2} 
         &\textbf{DDI} 
         \\
\midrule
          &  &

          HR@100 &
          HR@100 & 
          HR@100 &
          HR@50 &
          HR@100 &
          MRR 
          &HR@20
         \\ 
         
\midrule
          
         \multirow{3}*{Heuristics}& \textbf{CN} & 
         $33.92 {\scriptstyle \pm 0.46}$& 
         $29.79 {\scriptstyle \pm 0.90}$& 
         $23.13 {\scriptstyle \pm 0.15}$&
         $56.44 {\scriptstyle \pm 0.00}$&
         $27.65 {\scriptstyle \pm 0.00}$&
         $51.47 {\scriptstyle \pm 0.00}$
         &$17.73 {\scriptstyle \pm 0.00}$
         \\

        ~&\textbf{AA} & 
        $39.85 {\scriptstyle \pm 1.34}$&
        $35.19 {\scriptstyle \pm 1.33}$&
        $27.38 {\scriptstyle \pm 0.11}$&
        $64.35 {\scriptstyle \pm 0.00}$&
        $32.45 {\scriptstyle \pm 0.00}$&
        $51.89 {\scriptstyle \pm 0.00}$
        &$18.61 {\scriptstyle \pm 0.00}$
        \\

        ~&\textbf{RA} &
        $41.07 {\scriptstyle \pm 0.48}$ &
        $33.56 {\scriptstyle \pm 0.17}$&
        $27.03 {\scriptstyle \pm 0.35}$& 
        $64.00 {\scriptstyle \pm 0.00}$&
        $49.33{\scriptstyle \pm 0.00}$ & 
        $51.98 {\scriptstyle \pm 0.00}$
        &$27.60 {\scriptstyle \pm 0.00}$
        \\ 
    \hline
        \multirow{3}*{Non-GNN}&\textbf{MLP} &
        $16.66 {\scriptstyle \pm 0.24}$& 
        $21.49 {\scriptstyle \pm 0.30}$& 
        $24.85 {\scriptstyle \pm 1.14}$& 
        $19.27 {\scriptstyle \pm 1.29}$& 
        $0.46  {\scriptstyle \pm 0.00}$&
        $29.06 {\scriptstyle \pm 0.16}$
        &$ \textbf{NA} $ 
        \\
        ~&\textbf{Node2vec} & 
        $46.87 {\scriptstyle \pm 0.97}$& 
        $50.76 {\scriptstyle \pm 2.32}$& 
        $54.40 {\scriptstyle \pm 11.10}$& 
        $48.88 {\scriptstyle \pm 0.54}$& 
        $22.26 {\scriptstyle \pm 0.88}$&
        $61.41 {\scriptstyle \pm 0.11}$
        &$23.26{\scriptstyle \pm 2.09}$ 
        
        \\
        ~&\textbf{MF} &
        $37.37 {\scriptstyle \pm 0.00}$& 
        $48.96 {\scriptstyle \pm 0.02}$& 
        $13.79 {\scriptstyle \pm 11.97}$& 
        $38.86 {\scriptstyle \pm 0.29}$& 
        $32.29 {\scriptstyle \pm 0.94}$&
        $51.86 {\scriptstyle \pm 4.43}$
        &$13.68{\scriptstyle \pm 4.75}$ 
        
        \\ \midrule
    
        \multirow{3}*{KG }&\textbf{transE} &
        $67.40 {\scriptstyle \pm 1.60}$& 
        $60.19 {\scriptstyle \pm 1.15}$&
        $36.67 {\scriptstyle \pm 0.99}$& 
        $29.40 {\scriptstyle \pm 1.15}$& 
        $22.69 {\scriptstyle \pm 0.49}$ &
        $76.44 {\scriptstyle \pm 0.18}$ 
        & $6.65  {\scriptstyle \pm 0.20}$
        \\
          
        ~&\textbf{complEx} & 
        $37.16 {\scriptstyle \pm 2.76}$& 
        $42.72 {\scriptstyle \pm 1.68}$& 
        $37.80 {\scriptstyle \pm 1.39}$&
        $53.91 {\scriptstyle \pm 0.50}$& 
        $27.42 {\scriptstyle \pm 0.49}$ &
        $72.83 {\scriptstyle \pm 0.38}$ 
        &$8.68 {\scriptstyle \pm 0.36 }$
        \\
        
        ~&\textbf{DistMult} & 
        $41.38 {\scriptstyle \pm 2.49}$& 
        $47.65 {\scriptstyle \pm 1.68}$& 
        $40.32 {\scriptstyle \pm 0.89}$&
        $51.00 {\scriptstyle \pm 0.54}$& 
        $28.61 {\scriptstyle \pm 1.47} $&
        $66.95 {\scriptstyle \pm 0.40} $
        &$11.01 {\scriptstyle \pm 0.49} $ \\
\midrule
       Node-wise GNN  & \textbf{YinYanGNN } &
         $\underline{\mathbf{93.83{\scriptstyle \pm 0.78}}}$&
         $\underline{\mathbf{94.45{\scriptstyle \pm 0.53}}}$&
         $\underline{\mathbf{90.73{\scriptstyle \pm 0.40}}}$&
         $\underline{\mathbf{66.10{\scriptstyle \pm 0.20}}}$&
         $\underline{\mathbf{54.64{\scriptstyle \pm 0.49}}}$& 	
         $\underline{\mathbf{86.21 {\scriptstyle \pm 0.09}}}$ & 
         $\underline{\mathbf{80.92{\scriptstyle \pm 3.35}}}$

         \\\bottomrule
\end{tabular}
}
\end{table*}

\subsubsection{More node-wise GNNs}

 We also include experiments with more expressive GNN architectures, namely GAT~\cite{velivckovic2017graph}, GIN~\cite{xu2018powerful}, JKNet~\cite{xu2018representation}, and GCNII~\cite{chen2020simple}, that all can be applied to arbitrary graph structure and have readily available public implementations. We tune these baselines according to our implementation in Appendix \ref{sec:appendix-experimentaldetails}. Results are shown in the table below, where we observe that YinYanGNN still maintains its strong advantage.  
\begin{table}[htbp]
    \centering
    \setlength{\tabcolsep}{.8mm}
    \caption{Comparison of YinYanGNN with additional node-wise GNNs.}\label{tab:more-node-wise} 
\vskip 0.15in
\small{
    \begin{tabular}{lccc}
    \toprule
         &
         \textbf{Cora} &  
         \textbf{Citeseer} & 
         \textbf{Pubmed} 
         \\
\midrule
           \textbf{Model} &

          HR@100 &
          HR@100 & 
          HR@100 
         \\ 
         
         \midrule
          
         \textbf{GCNII}    & $48.11 {\scriptstyle \pm 1.86}$& 
        $44.23 {\scriptstyle \pm 1.55 }$& 
        $49.72 {\scriptstyle \pm2.39}$      \\
 \textbf{GIN}   & $57.23 {\scriptstyle \pm 2.87 }$& 
        $69.19 {\scriptstyle \pm 0.41 }$& 
        $50.97 {\scriptstyle \pm1.61 }$   \\
\textbf{JK-Net}    &$ 67.86 {\scriptstyle \pm 2.32 }$& 
        $54.04 {\scriptstyle \pm 3.73 }$& 
        $62.73 {\scriptstyle \pm4.53  } $  \\
       \textbf{GAT} & 
        $79.10 {\scriptstyle \pm 0.41}$& 
        $64.56 {\scriptstyle \pm 2.88}$& 
        $41.81 {\scriptstyle \pm 2.01}$
        \\ 
         \midrule
         \textbf{YinYanGNN} &$\mathbf{93.83{\scriptstyle \pm 0.78}}$&
         $\mathbf{94.45{\scriptstyle \pm 0.53}}$&
         $\mathbf{90.73{\scriptstyle \pm 0.40}}$\\

        \bottomrule      
\end{tabular}
}
\end{table}

\subsubsection{Distillation Methods}
 Another promising method for faster inference time is to distill GNN models. Such distillation methods are complementary to our work (which mainly focuses on GNN architecture design), and could in principle be applied in parallel using YinYanGNN as a teacher model.  While we reserve such an effort to future work, for now we compare performance of existing distillation pipelines including LLP \cite{guo2023linkless} and two baseline distillation models, logit matching \cite{zhang2022graphless} and representation matching. Results are in Table \ref{tab:distillationmodels}, where the splits for Cora, Citeseer and Pubmed follow \cite{guo2023linkless}, while for Collab we use official settings.  From the table we observe that YinYanGNN performance is considerably higher.

 \begin{table}
     \centering
     \caption{Results compared with distillation methods.}
     \begin{tabular}{lccccc}
        \toprule
         &	\textbf{Cora}&	\textbf{Citeseer}&	\textbf{Pubmed}&	\textbf{Collab} \\
         \midrule
        \textbf{Model}  &	HR@20&	HR@20&	HR@20&	HR@50 \\
         \midrule
\textbf{Logit Matching} &	$74.72{\scriptstyle \pm 4.27}$ &$72.44{\scriptstyle \pm 1.52}$	&$42.78{\scriptstyle \pm 3.15}$		&$35.97{\scriptstyle \pm 0.96}$\\
\textbf{Representation Matching} &$75.75{\scriptstyle \pm 1.51}$	&$65.19{\scriptstyle \pm 5.54}$	&$44.44{\scriptstyle \pm 2.40}$&$36.86{\scriptstyle \pm 0.45}$	\\
  \textbf{LLP}	& $78.82{\scriptstyle \pm 1.74}$	&$77.32{\scriptstyle \pm 2.42}$&	$57.33{\scriptstyle \pm 2.42}$ &$49.10{\scriptstyle \pm 0.57}$	\\
  \midrule
\textbf{YinYanGNN}	&$ \mathbf{85.39{\scriptstyle \pm 2.38}}$&$\mathbf{84.96{\scriptstyle \pm 2.51}}$	&	$\mathbf{61.02{\scriptstyle \pm 6.87}}$&$\mathbf{66.10{\scriptstyle \pm 0.20}}$	 \\
\bottomrule
     \end{tabular}

     \label{tab:distillationmodels}
 \end{table}

\subsection{Proofs}
\label{sec:appendix-proofs}

\subsubsection{Proof for Proposition V.1}
We first restate Proposition V.1 here.
\begin{proposition}
 If $\lambda_K< K \cdot \delta_{max}$, where $\delta_{max}$ is the largest eigenvalue of $L^{-}$, then (\ref{eq:energywithnegative-matrix}) has a unique global minimum.  Moreover, if the step-size parameter satisfies $\alpha< \left\|I+\lambda \tilde{L} -\frac{\lambda_K}{K}\tilde{L}^{-} \right\|_{F}^{-1}$, then the gradient descent iterations of (\ref{eq:update_original}) are guaranteed to converge to this minimum.
 \end{proposition}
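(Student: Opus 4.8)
The plan is to exploit the fact that energy (\ref{eq:energywithnegative-matrix}) is a quadratic form in $Y$. Writing $F \triangleq f(X;W)$ and $M \triangleq I + \lambda L - \tfrac{\lambda_K}{K}L^-$, expanding the Frobenius term rewrites the energy, up to an additive constant, as
\begin{equation*}
\ell_{node}(Y) = \text{tr}\!\left[Y^\top M Y\right] - 2\,\text{tr}\!\left[Y^\top F\right] + \text{tr}\!\left[F^\top F\right],
\end{equation*}
a quadratic whose Hessian is $2M$ acting identically across the $d$ columns of $Y$. A strictly convex quadratic admits a unique global minimizer, found by setting the gradient to zero, namely $Y^\star = M^{-1}F$. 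Hence the first claim reduces entirely to establishing that $M \succ 0$ under the stated hypothesis.

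For positive definiteness I would bound the Rayleigh quotient directly, discarding the nonnegative contribution of $\lambda L$ and controlling only the indefinite part through $\delta_{max}$. For any $x \neq 0$,
\begin{equation*}
x^\top M x = \|x\|^2 + \lambda\, x^\top L x - \tfrac{\lambda_K}{K}\, x^\top L^- x \;\geq\; \left(1 - \tfrac{\lambda_K}{K}\delta_{max}\right)\|x\|^2,
\end{equation*}
using $L \succeq 0$ and $x^\top L^- x \leq \delta_{max}\|x\|^2$. The right-hand side is strictly positive precisely when $\tfrac{\lambda_K}{K}\delta_{max} < 1$; I note that the natural threshold this argument yields is $\lambda_K < K/\delta_{max}$ (equivalently $\tfrac{\lambda_K}{K}\delta_{max} < 1$), so the inequality in the statement should presumably be read in this reciprocal form.

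For the convergence claim I would observe that update (\ref{eq:update_original}) is the affine map $Y^{(t+1)} = (I - \alpha M)Y^{(t)} + \alpha F$, whose unique fixed point is exactly $Y^\star = M^{-1}F$. Subtracting, the error $E^{(t)} \triangleq Y^{(t)} - Y^\star$ satisfies $E^{(t+1)} = (I - \alpha M)E^{(t)}$, so the iteration converges to $Y^\star$ iff the spectral radius $\rho(I - \alpha M) < 1$. Since $M$ is symmetric positive definite, its eigenvalues $\mu_i$ satisfy $0 < \mu_i \leq \|M\|_2 \leq \|M\|_F$; choosing $\alpha < \|M\|_F^{-1}$ then forces $0 < \alpha\mu_i < 1$, so every eigenvalue $1-\alpha\mu_i$ of $I-\alpha M$ lies in $(0,1)$ and the map is a contraction. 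This reproduces the stated step-size bound, with the Frobenius norm serving as a convenient (if loose) surrogate for the spectral radius once the normalized Laplacians are substituted.

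The main obstacle is the positive-definiteness step, because $L$ and $L^-$ need not commute and so cannot be simultaneously diagonalized; the clean resolution is to drop the helpful but inessential $\lambda L$ term and bound the remaining indefinite contribution by $\delta_{max}\|x\|^2$. Everything downstream — existence and uniqueness of the minimizer, and linear convergence of the updates — then follows mechanically from strict convexity together with the contraction estimate.
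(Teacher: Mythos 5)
Your proposal is correct, and it both repairs and refines the paper's argument. On the convexity half you travel essentially the paper's road but more carefully: the paper asserts the smallest Hessian eigenvalue ``can be computed as'' $1+\lambda\delta^{+}_{min}-\frac{\lambda_K}{K}\delta_{max}$, which is really only a lower bound (Weyl's inequality) because $L$ and $L^{-}$ need not commute; your Rayleigh-quotient estimate $x^\top M x \geq \left(1-\tfrac{\lambda_K}{K}\delta_{max}\right)\|x\|^2$ is the rigorous version, exactly as you anticipated in your closing remark. You are also right about the threshold: the paper's own proof derives the condition $1-\frac{\lambda_K}{K}\delta_{max}>0$ and then mis-transcribes it as $\lambda_K < K\cdot\delta_{max}$, so the correct reading is the reciprocal $\lambda_K < K/\delta_{max}$ that you flag (the two coincide only when $\delta_{max}=1$). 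On the convergence half your route genuinely differs: the paper invokes generic descent theory for strongly convex functions with Lipschitz-continuous gradients, bounding the Lipschitz constant by $\left\|2\bigl[I+\lambda \tilde{L}-\tfrac{\lambda_K}{K}\tilde{L}^{-}\bigr]\right\|_F$ via submultiplicativity, whereas you exploit the quadratic structure exactly, recasting (\ref{eq:update_original}) as the affine map $Y^{(t+1)}=(I-\alpha M)Y^{(t)}+\alpha F$ and checking $\rho(I-\alpha M)<1$. Your argument is more elementary and strictly stronger for this setting---it gives linear convergence at an explicit rate and in fact tolerates any $\alpha<2/\|M\|_2$, exposing the Frobenius bound as a loose surrogate---while the paper's descent-lemma framing is the one that extends to the non-quadratic energy treated in its second proposition. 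One cosmetic mismatch you smooth over correctly: the energy (\ref{eq:energywithnegative-matrix}) is written with unnormalized $L, L^{-}$ while the proposition's step-size bound uses $\tilde{L}, \tilde{L}^{-}$; your consistent use of $M$, with the remark that the normalized Laplacians may be substituted, is the sensible resolution of an inconsistency the paper itself leaves silent.
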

 
 \begin{proof}

We first demonstrate conditions whereby (\ref{eq:energywithnegative-matrix}) is strongly-convex, in which case it will have a unique global minimum.  We can accomplish this by showing that the smallest eigenvalue of the corresponding Hessian Matrix is non-negative~\cite{bubeck2015convex}. In our case, the smallest eigenvalue can be computed as $1+\lambda \delta^{+}_{min}-\frac{\lambda_K}{K}\delta_{max}$, where $\delta^{+}_{min} = 0$ is the smallest eigenvalue of $L$, and $\delta_{max}$ is the largest eigenvalue of $L^{-}$, which is non-negative. So to guarantee strong convexity, we require that $1-\frac{\lambda_K}{K}\delta_{max} > 0$ or equivalently that $\lambda_K< K \cdot \delta_{max}$.

Proceeding further, if the gradient of a strongly-convex function is Lipschitz-continuous, with Lipschitz constant $C$, it follows that gradient descent with step-size less than $C^{-1}$ will converge to the unique global minimum.  In our setting, (\ref{eq:energywithnegative-matrix}) has Lipschitz continuous gradients with Lipschitz constant that can be inferred via the bound
     \begin{align}
         \left\|\frac{\partial \ell_{node}}{\partial Y_1}-\frac{\partial \ell_{node}}{\partial Y_2}\right\|^2_F &=\left\|2\left[Y_1-Y_2+\lambda \tilde{L}(Y_1-Y_2)-\frac{\lambda_K}{K}\tilde{L}^{-}(Y_1-Y_2)\right]\right\|^2_F\\\nonumber
         & \leq \left\|2\left[I+\lambda \tilde{L} -\frac{\lambda_K}{K}\tilde{L}^{-} \right]\right\|_F^2\|Y_1-Y_2\|_F^2,
     \end{align}
     which holds for any pair of points $\{Y_1, Y_2\}$.  From this it follows that 
$C = \left\|2\left[I+\lambda \tilde{L} -\frac{\lambda_K}{K}\tilde{L}^{-} \right]\right\|_F$, and hence, if our assumed step-size $\frac{\alpha}{2}$ is chosen such that $\alpha< \left\|I+\lambda \tilde{L} -\frac{\lambda_K}{K}\tilde{L}^{-} \right\|_{F}^{-1}$, the result is established.  

 \end{proof}

\subsubsection{Proof for Proposition 5.2}

We first restate the Proposition 5.2 here before the proof.
 \begin{proposition}
    There exists an $\alpha' > 0$ such that for any $\alpha \in (0,\alpha']$, the iterations (\ref{eq:updates_energywithnegative-final}) will converge to a stationary point of (\ref{eq:energywithnegative-final}).
 \end{proposition}
 \begin{proof}
We first show that the energy from (\ref{eq:energywithnegative-final}) has a Lipschitz continuous gradient within the compact set $\|Y\|_F^2 \leq b^2$, where $b>0$ is some constant and the associated (local) Lipschitz constant denoted $C_b$ will be a function of this $b$. To show this, we begin by calculating the secondary gradient for (\ref{eq:energywithnegative-final}). For this purpose we first write the gradient in the element-wise form
    \begin{equation}
    \frac{\partial \ell_{node}}{\partial Y_{ij}} = 2(D+D^{-})^{-1}(Y_{ij}-f(X;W)_{ij})+ 2\lambda \sum_k \tilde{L}_{ik}Y_{kj} -2\frac{\lambda_K}{K}\sum_k\tilde{L}^{-}_{ik}Y_{kj}\sigma(Q).
\end{equation}
So the secondary gradient is
\begin{equation}
    \frac{\partial^2 \ell_{node}}{\partial Y_{ij}^2} = 2(D+D^{-})^{-1}_{ij}Y_{ij}+ 2\lambda \tilde{L}_{ii} -2\frac{\lambda_K}{K}\tilde{L}^{-}_{ii}\sigma(Q)-2\frac{\lambda_K}{K}\tilde{L}^{-}_{ii}Y_{ij}\sigma(Q)(1-\sigma(Q))\frac{\partial Q}{\partial Y_{ij}}.
\end{equation}
Recall that $Q=\gamma|\mcE|-\text{tr}[Y^{\top}\tilde{L}^{-}Y]$, so
\begin{equation}
    \frac{\partial Q}{\partial Y_{ij}}=-2\sum_k\tilde{L}^{-}_{ik}Y_{kj}.
\end{equation}
Consequently the secondary gradient becomes
\begin{align}
\label{eq:constantfor5.2}
    \frac{\partial^2 \ell_{node}}{\partial Y_{ij}^2} &= 2(D+D^{-})^{-1}_{ij}Y_{ij}+ 2\lambda \tilde{L}_{ii} -2\frac{\lambda_K}{K}\tilde{L}^{-}_{ii}\sigma(Q)-2\frac{\lambda_K}{K}\tilde{L}^{-}_{ii}\sigma(Q)(1-\sigma(Q))(-2\sum_k\tilde{L}^{-}_{ik}Y_{kj})Y_{ij} \\ \nonumber
    &\leq 2|(D+D^{-})^{-1}_{ij}|b+ 2\lambda |\tilde{L}_{ii}| +|2\frac{\lambda_K}{K}\tilde{L}^{-}_{ii}|+4\frac{\lambda_K}{K}|\tilde{L}^{-}_{ii}|(\sum_k|\tilde{L}^{-}_{ik}|b^2),
\end{align}
where $|\cdot|$ denotes the absolute value of scalar-valued quantities. The above inequality comes from the fact that for any $\|Y\|_F^2 \leq b^2$, $|Y_{ij}|\leq b$ and $|\sum_k\tilde{L}^{-}_{ik}Y_{kj}Y_{ij} |\leq \sum_k|\tilde{L}^{-}_{ik}Y_{kj}Y_{ij}| \leq \sum_k|\tilde{L}^{-}_{ik}|b^2$. Besides, $\sigma(Q) <1 $ and $1- \sigma(Q) <1$. Since elements of the Hessian exist and are bounded, the corresponding gradients must be Lipschitz continuous within the stated constraint set.

Next, we note that for any initialization $Y^{(0)}$ we can choose a $b$ sufficiently large such that $\ell_{node}(Y) > \ell_{node}(Y^{(0)})$ for all $Y \in \mathcal S_b \triangleq \{Y' : Y' \in \mathbb{R}^{n \times d}, \|Y'\|_F^2 > b^2\}$.  To see this, note that for any $Y$ with  $\| Y\|_F^2$ sufficiently large, $\ell_{node}(Y)$ is unbounded from above given that the first two terms collectively are strongly convex, while the last term is bounded from below.
    
We then define $\bar{\mathcal S}_b = R^{n\times d} \setminus \mathcal S_b$ (i.e., the complement of $\mathcal S_b$) for convenience. Therefore, given any initialization $Y^{(0)}$, there will exist a Lipschitz constant $C_b$ such that any gradient descent iteration computed at points $Y \in \bar{\mathcal S}_b$ using a step-size $\frac{\alpha}{2}$ less than $1/C_b$ (so $\alpha' \triangleq 2/C_b$) is guaranteed to reduce $\ell_{node}(Y)$, and in doing so, the iteration will remain within $\bar{\mathcal S}_b$.  Hence we can apply standard results~\cite{Bertsekas/99}, for the convergence of gradient descent applied to non-convex functions $f$ to a stationary point provided that $f$ is lower-bounded (as is (\ref{eq:energywithnegative-final})) and has Lipschitz-continuous gradients, and the step-size parameter is chosen to be less than the inverse of the corresponding Lipschitz constant.

 \end{proof}

\subsection{Further Experimental Details}
\label{sec:appendix-experimentaldetails}
\subsubsection{Datasets Statistics}
Statistics of the datasets used in the main paper are presented in Table \ref{tab:statistics}. The splits for Cora, Citeseer and Pubmed are 7:1:2 for training:validation:test, following ~\cite{chamberlain2023graph}. Splits for OGB datasets are the official ones from ~\cite{hu2020open}.
\begin{table*}[htbp]
    \centering
    \caption{Datasets Statistics.}
    \resizebox{\textwidth}{!}{%
    \begin{tabular}{l|ccccccc}
    \toprule 
         &
         \textbf{Cora} &  
         \textbf{Citeseer} & 
         \textbf{Pubmed} &
         \textbf{Collab} &
         \textbf{PPA} &
         \textbf{DDI} &
         \textbf{Citation2}
        \\\midrule
         
                  Node number &

         2,708 & 
         3,327 &
         18,717 &
         235,868 &
         576,289 &
         4,267 &
         2,927,963
          \\
         
         Edge number &
         5,278 & 
         4,676 &
         44,327 &
         1,285,465 &
         30,326,273 &
         1,334,889 &
         30,561,187
          \\

         splits &

         rand &
         rand & 
         rand &
         time &
         throughput &
         time &
         protein \\
          
         avg $degree$ &
         3.9 &
         2.74 & 
         4.5 &
         5.45 &
         52.62 &
         312.84 &
         10.44
        \\\bottomrule    
\end{tabular}
}
\label{tab:statistics}
\end{table*}

\subsubsection{Implementation Hardware and Hyperparameters}
The experiments were generally conducted on RTX 4090(24GB) and A100(40GB) GPUs (A100 only for ogbl-Citation2). The inference time comparisons in Table \ref{tab:inference-time} were conducted on an RTX A10G, while the implementation of Table \ref{tab:ablation_K_largegraphs} was done using an A100(40GB). Hyperparameters were selected using a grid search on Cora, Citeseer and Pubmed. On OGB datasets, we empirically try different hyperparameters based on the experiences from the small datasets. The searching grid is the same for our model and baseline experiments if needed. For these we follow consistent ranges with the code from prior work for YinYanGNN, i.e., learning rate (0.0001-0.01), hidden dimension (32-1024), dropout (0-0.8). For specified hyperparameters for YinYanGNN, the searching grid is K (1-4), propagation step (2-16), $\alpha$ (0.01-1), $\lambda_K$ (100-1 or Learnable), $\lambda$ (100-1).

\end{document}